\documentclass{article}

\usepackage[preprint]{neurips_2026}

\usepackage[utf8]{inputenc} % allow utf-8 input
\usepackage[T1]{fontenc}    % use 8-bit T1 fonts
\usepackage{hyperref}       % hyperlinks
\usepackage{url}            % simple URL typesetting
\usepackage{booktabs}       % professional-quality tables
\usepackage{amsfonts}       % blackboard math symbols
\usepackage{nicefrac}       % compact symbols for 1/2, etc.
\usepackage{microtype}      % microtypography
\usepackage{xcolor}         % colors

\usepackage{url}
\usepackage{algorithm}
\usepackage{algorithmic}

\usepackage{amsmath}
\usepackage{amssymb}
\usepackage{mathtools}
\usepackage{amsthm}

\usepackage[textsize=tiny]{todonotes}

\usepackage{comment}
\usepackage{xspace}
\usepackage{enumitem}

\usepackage{amsfonts}
\usepackage{bm}
\usepackage{nicefrac}
\usepackage{dsfont}
\definecolor{mycyan}{RGB}{230, 255, 255}
\usepackage{mdframed}
\definecolor{theoremcolor}{rgb}{0.96, 0.96, 0.96}
\definecolor{examplecolor}{rgb}{1, 1, 1.0}
\mdfsetup{
    backgroundcolor=mycyan,
    linewidth=0pt,
}\newmdtheoremenv[linewidth=0pt,innerleftmargin=6pt,innerrightmargin=6pt]{definition}{Definition}
\newmdtheoremenv[linewidth=0pt,innerleftmargin=6pt,innerrightmargin=6pt]{proposition}{Proposition}
\newmdtheoremenv[linewidth=0pt,innerleftmargin=0pt,innerrightmargin=0pt,backgroundcolor=examplecolor]{example}{Example}
\newmdtheoremenv[linewidth=0pt,innerleftmargin=6pt,innerrightmargin=6pt]{corollary}{Corollary}
\newmdtheoremenv[linewidth=0pt,innerleftmargin=6pt,innerrightmargin=6pt]{theorem}{Theorem}
\newmdtheoremenv[linewidth=0pt,innerleftmargin=6pt,innerrightmargin=6pt]{lemma}{Lemma}
\newmdtheoremenv[linewidth=0pt,innerleftmargin=6pt,innerrightmargin=6pt]{remark}{Remark}

\usepackage{booktabs}
\usepackage{makecell}
\usepackage{multirow}
\usepackage{arydshln}

\usepackage{enumitem}
\setlist[enumerate]{
leftmargin=*}

\makeatletter
\def\adl@drawiv#1#2#3{%
        \hskip.5\tabcolsep
        \xleaders#3{#2.5\@tempdimb #1{1}#2.5\@tempdimb}%
                #2\z@ plus1fil minus1fil\relax
        \hskip.5\tabcolsep}
\newcommand{\cdashlinelr}[1]{%
  \noalign{\vskip 2pt
           \global\let\@dashdrawstore\adl@draw
           \global\let\adl@draw\adl@drawiv}
  \cdashline{#1}[.4pt/2pt]
  \noalign{\global\let\adl@draw\@dashdrawstore
           \vskip 2pt}}
\makeatother

\usepackage{tikz}
\usetikzlibrary{arrows.meta,calc,positioning,decorations.pathreplacing}

\definecolor{set10-red}{HTML}{e41a1c}
\definecolor{set10-blue}{HTML}{377eb8}
\definecolor{set10-green}{HTML}{4daf4a}

\newcommand{\kk}{{\bm{k}}}
\newcommand{\vv}{{\bm{v}}}
\newcommand{\K}{{\bm{K}}}
\newcommand{\V}{{\bm{V}}}

\title{Sparse Attention as Compact Kernel Regression}

\author{%
  Saul Santos$^{\Omega}$ \quad
  Nuno Gonçalves$^{\Omega,\clubsuit}$ \quad
  Daniel C. McNamee$^{\dagger}$ \quad \AND
  Marcos Treviso$^{\Omega}$ \quad
  André F.T. Martins$^{\Omega,\ddagger}$ \\[0.5em]
  \texttt{saul.r.santos@tecnico.ulisboa.pt} \\[0.8em]
  $^{\Omega}$Instituto Superior Técnico \& Instituto de Telecomunicações, Universidade de Lisboa, Portugal \\
  $^{\dagger}$Champalimaud Research, Lisbon, Portugal \\
  $^{\ddagger}$TransPerfect, Lisbon, Portugal \\
  $^{\clubsuit}$Carnegie Mellon University, Pittsburgh, USA
}

\begin{document}

\maketitle

\begin{abstract}
Recent work has revealed a link between self-attention mechanisms in transformers and test-time kernel regression via the Nadaraya-Watson estimator, with standard softmax attention corresponding to a Gaussian kernel. 
However, a kernel-theoretic understanding of \textit{sparse} attention mechanisms is currently missing. 
In this paper, we establish a formal correspondence between sparse attention and \textit{compact} (bounded support) kernels. We show that normalized ReLU and sparsemax attention arise from Epanechnikov kernel regression under fixed and adaptive normalizations, respectively. 
More generally, we demonstrate that widely used kernels in nonparametric density estimation---including Epanechnikov, biweight, and triweight---correspond to $\alpha$-entmax attention with $\alpha = 1 + \frac{1}{n}$ for $n \in \mathbb{N}$, while the softmax/Gaussian relationship emerges in the limit $n \to \infty$. 
This unified perspective explains how sparsity naturally emerges from kernel design and provides principled alternatives to heuristic top-$k$ attention and other associative memory mechanisms. 
Experiments with a kernel-regression-based variant of transformers---Memory Mosaics---show that kernel-based sparse attention achieves competitive performance on language modeling, in-context learning, and length generalization tasks, offering a principled framework for designing  attention mechanisms. 
\end{abstract}

\section{Introduction} 
Self-attention is a central component of transformer architectures, enabling models to capture long-range dependencies within sequences \citep{vaswani2017attention}. A growing body of work has established connections between attention-based in-context learning and variants of associative memories, including Hopfield networks \citep{ramsauer2020hopfield}, fast weight programmers \citep{schlag2021linear}, mesa-optimization \citep{von2023transformers}, and various forms of test-time regression and memory-based computation \citep[\textit{inter alia}]{wang2025testtime,behrouz2024titans,sun2025learning,behrouz2025s}. 
A key observation in several of these works is that standard softmax-based attention can be interpreted as \textit{Gaussian kernel regression} on the key-value cache---this has motivated new transformer-related architectures such as \textit{Memory Mosaics} \citep{zhang2025memory,zhang2025memory2}. 
However, \textit{is the Gaussian kernel---and softmax attention---the best design choice}? 
It has been pointed out that the dense weights produced by softmax attention mechanisms may cause attention dispersion \citep{velickovic2025softmax}---a phenomenon where many weak contributions from irrelevant positions dilute the influence of truly relevant ones. 
Evidence from brain research (see \S\ref{sec:related_work} for details) suggests that neocortical memory retrieval is mediated by a relatively small set of well-separated memories, rather than weakly integrating over an increasingly large number of memories~\cite{tulving2002episodic,gershman2025keyvalue}. 

The observations above provide strong motivation for \textit{sparse} attention alternatives, such as top-$k$ softmax \citep{gupta-etal-2021-memory}, sparsemax \citep{martins2016softmax}, and $\alpha$-entmax~\citep{peters2019sparse}. 
The last two provide continuous, fully differentiable mappings from scores to attention weights while maintaining exact sparsity (Figure~\ref{fig:overview}), enabling transformers to learn attention sparsity adaptively, rather than relying on a fixed truncation level.
However, unlike the softmax case, a theoretical test-time regression understanding of sparse attention is currently missing.

\begin{figure*}[t]
    \centering
    \includegraphics[width=0.32\linewidth]{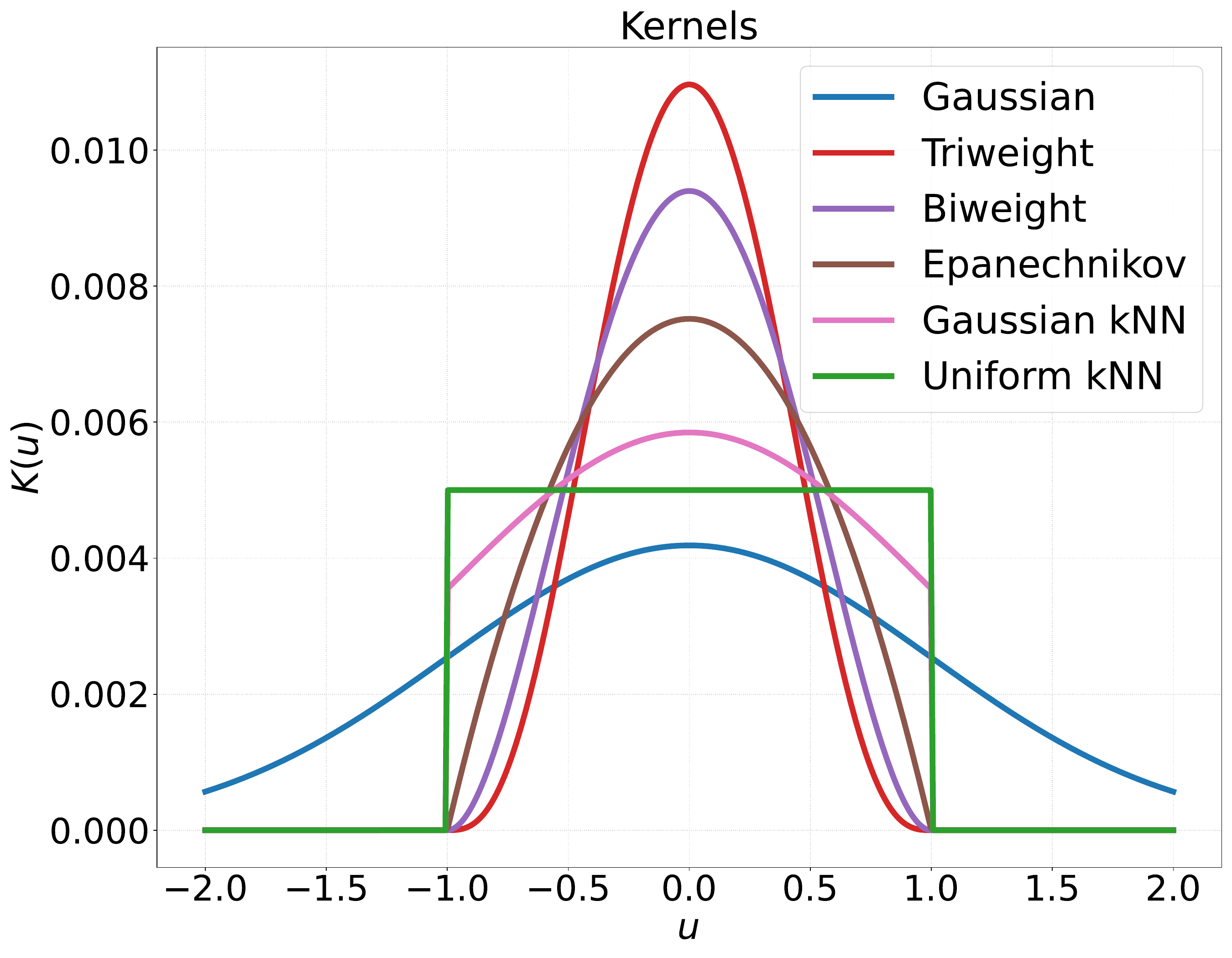}
    \hfill
    \includegraphics[width=0.63\linewidth]{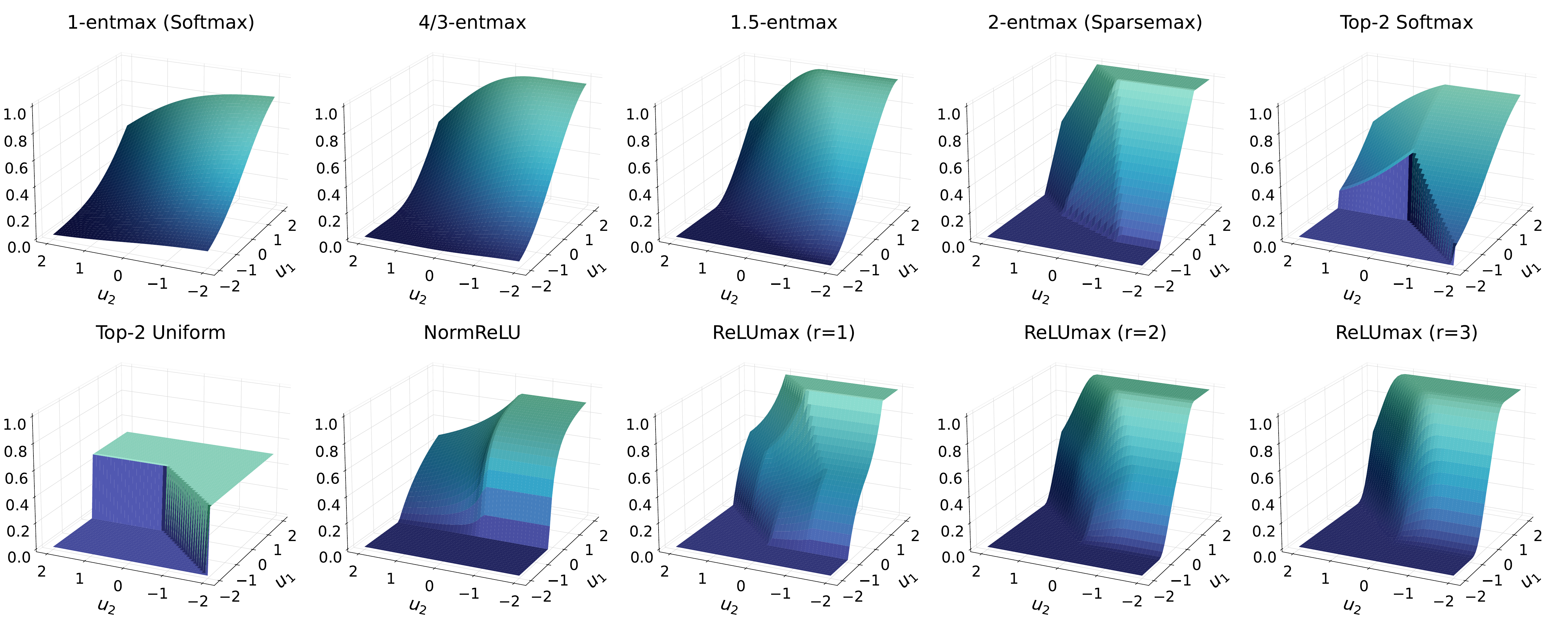}
    \caption{\textbf{Comparison between classical kernel functions and their corresponding attention activations.}
    The left plot shows normalized one-dimensional kernel functions commonly used in kernel regression evaluated on an input $\bm{u} \in \mathbb{R}^n$. 
    The right panel shows different attention transformations applied to a 3D score vector $\bm{u} = [0, u_1, u_2]$. 
    Each transformation allows an interpretation with corresponding kernels shown on the left: 
    softmax ($\alpha=1$) corresponds to the Gaussian kernel; 
    normalized ReLU, ReLUmax, and sparsemax ($\alpha=2)$ are derived from Epanechnikov kernel variants ($r=1$); 
  The 4/3- and 1.5-entmax transformations correspond to the biweight ($r=2$) and triweight ($r=3$) kernels, respectively, while 1- and 2-ReLUmax follow analogous patterns.
    top-$k$ uniform mirrors the top-$k$ uniform kernel, while top-$k$ softmax can be viewed as a truncated Gaussian kernel. }
   \label{fig:overview}
\end{figure*}

Our work aims to fill that gap by providing a unified kernel regression perspective on both dense and sparse attention mechanisms. We show that a range of attention transformations---including softmax, sparsemax, $\alpha$-entmax, top-$k$ uniform, top-$k$ softmax, normalized ReLU, and a new transformation, ReLUmax---can be interpreted as Nadaraya-Watson estimators with different kernel functions. This viewpoint not only explains how sparsity naturally emerges from kernel design but also offers a principled framework for developing attention mechanisms with controlled sparsity. 
Our main contributions are: %
\footnote{Our code is made available in \url{https://github.com/deep-spin/sparse_kernel_regression}.}

\begin{enumerate}
\item \textbf{We establish that sparsemax attention corresponds to auto-normalized Epanechnikov kernel regression} through an adaptive bandwidth, providing a rigorous link between sparse attention and compact support kernels.

\item \textbf{More generally, we show that $\alpha$-entmax attention with $\alpha>1$ corresponds to kernel regression with compact support kernels} of the form $K(\bm{u}) \propto [1 - \|\bm{u}\|^2]_+^r$ where $r = (\alpha-1)^{-1}$, encompassing widely used kernels such as Epanechnikov, biweight, and triweight.

\item \textbf{We unify top-$k$ and fixed-normalization sparse attention within the same kernel regression framework}: top-$k$ softmax corresponds to Gaussian kernel $k$-nearest-neighbor (kNN) regression, 
normalized ReLU corresponds to fixed-bandwidth Epanechnikov regression.
\item \textbf{We propose a new transformation, $r$-ReLUmax}, which corresponds to a max-anchored polynomial rectified variant.

\item \textbf{We conduct experiments demonstrating the benefits of compact kernels in Memory Mosaics}, a strong transformer-like architecture tied to kernel regression, achieving superior performance on simple language modeling, in-context learning, and length generalization tasks.
\end{enumerate}

\section{Background}

%\subsection{Attention Mechanisms}

%The core of modern attention mechanisms is a transformation $\pi: \mathbb{R}^n \rightarrow \triangle_n$ that maps a vector of scores to a probability distribution over the simplex $\triangle_n := \{\bm{p} \in \mathbb{R}^n : \bm{p} \geq \mathbf{0}, \mathbf{1}^\top \bm{p} = 1\}$. Given query $\bm{q} \in \mathbb{R}^d$, keys $\K = [\kk_1, ..., \kk_n]^\top \in \mathbb{R}^{n \times d}$, and values $\V = [\vv_1, ..., \vv_n]^\top \in \mathbb{R}^{n \times d}$, the attention mechanism computes
%\begin{align}
%\sum_{i=1}^n \pi_i(\K\bm{q}) \vv_i = \V^\top \pi(\K \bm{q}).
%\end{align}
%The choice of transformation $\pi$ determines the behavior of the attention mechanism. The most common choice is the \textit{softmax transformation}, $\text{softmax}(\bm{z})_i = \frac{\exp(z_i)}{\sum_{j=1}^n \exp(z_j)}$. 

%Softmax produces dense attention weights, with all positions receiving nonzero probabilities. 
%For long contexts, this can lead to attention dispersion \citep{velickovic2025softmax} and makes token representations become less distinguishable~\citep{barbero2024transformers}, motivating  sparse alternatives. 

\subsection{Sparsemax and $\alpha$-entmax} \label{subsec:entmax_transormation}

A generalization of softmax is given by the $\alpha$-entmax transformation~\citep{peters2019sparse}, which has the form
\begin{align}
\label{eq:entmax}
\alpha\text{-}\text{entmax}(\bm{z})
=
\left[(\alpha - 1)\bm{z} - \tau \mathbf{1}\right]_+^{\frac{1}{\alpha - 1}},
\end{align} 
where $[x]_+ = \max(0,x)$ is the ReLU operator and $\tau$ is chosen such that $\sum_i p_i = 1$.
 This transformation arises as the solution of the optimization problem $
 \alpha\text{-}\text{entmax}(\bm{z})
 \;:=\;
 \arg\max_{\bm{p} \in \triangle_n}
 \;\bm{z}^\top \bm{p} + H_\alpha(\bm{p})$,
 where $H_\alpha(\bm{p}) = \frac{1}{\alpha(\alpha-1)}\left(1 - \sum_i p_i^\alpha\right)$
 is the Tsallis $\alpha$-entropy \citep{tsallis1988possible}. 
For $\alpha > 1$, $\alpha$-entmax produces sparse probability distributions, with the degree of sparsity increasing as $\alpha$ grows. The limit $\alpha \to 1$ recovers softmax, while the special case $\alpha = 2$ yields \textit{sparsemax}~\citep{martins2016softmax}.

\subsection{Attention and Kernel Regression}
\label{subsec:attention_and_kernel_methods}

Attention mechanisms can be interpreted through the lens of nonparametric kernel regression  \citep[\S1.5]{hardle1990applied,tsybakov2008nonparametric}.  
In this view, attention computes a locally weighted average of values, with the weights determined by a similarity-dependent kernel centered at the query. 

Formally, given data points $\{(\kk_i, \vv_i)\}_{i=1}^{n-1}$ and an input query $\bm{q} \equiv \kk_n$, the \textit{Nadaraya-Watson estimator} \citep{nadaraya, watson} predicts the next target value as
\begin{align} \label{eq:kernel}
\hat{\vv}_n(\bm{q}) = \mathbb{E}[\V_n | \bm{q}]
=
\sum_{i=1}^{n-1} \frac{ K_h(\kk_i - \bm{q})}
     {\sum_{j=1}^{n-1} K_h(\kk_j - \bm{q})}\vv_i,
\end{align}
where $K_h(\bm{u}) = \frac{1}{h^d} K\!\left(\frac{\bm{u}}{h}\right)$ is a kernel on $\mathbb{R}^d$ with bandwidth $h$, and $\int_{\mathbb{R}^d} K(\bm{u}) d\bm{u} = 1$.\footnote{Throughout, we use ``kernel'' in the classical sense of kernel density estimation and kernel regression. This should not be confused with positive-definite kernels in the RKHS sense.} 
The kernel controls how strongly each data point $(\kk_i, \vv_i)$ contributes to the prediction as a function of the distance between $\kk_i$ and the query $\bm{q}$.

\paragraph{Softmax attention as Gaussian kernel regression.}
Let $\bm{K} = [\kk_1, ..., \kk_{n-1}]^\top \in \mathbb{R}^{(n-1) \times d}$
denote the keys, $\bm{q} \equiv \bm{k}_n$ denote the query, and $\bm{V} = [\vv_1, ..., \vv_n]^\top \in
\mathbb{R}^{n \times d}$ the  values. For clarity,
we consider the setting in which the keys are $\ell_2$-normalized,
$\|\bm{q}\| = \|\kk_i\| = 1$, which allows us to directly relate dot-product
attention to Euclidean distances.

Consider the Gaussian kernel
$K(\bm{u}) \propto \exp(-\tfrac{1}{2}\|\bm{u}\|^2)$.
Using the identity
$\tfrac{1}{2}\|\kk_i - \bm{q}\|^2 = 1 - \kk_i^\top \bm{q}$,
the Nadaraya-Watson estimator~\eqref{eq:kernel} yields
\begin{align}
\label{eq:update}
\hat{\vv}_n(\bm{q})
=
\sum_{i=1}^{n-1}
\text{softmax}_i\!\left(\frac{\K\bm{q}}{h^2}\right)
\vv_i .
\end{align}
This expression recovers standard softmax attention \citep{vaswani2017attention}, with the kernel
bandwidth $h$ corresponding to the \emph{square root} of the softmax
temperature. In transformers, the conventional scaling
$\kk_i^\top \bm{q} / \sqrt{d}$ therefore corresponds to choosing
$h^2 = \sqrt{d}$, i.e., an effective bandwidth $h = d^{1/4}$. However, softmax produces dense attention weights, with all positions receiving nonzero probabilities. For long contexts, this can lead to attention dispersion \citep{velickovic2025softmax} and makes token representations become less distinguishable~\citep{barbero2024transformers}, motivating  sparse alternatives.

Choosing an appropriate bandwidth is central in kernel regression
theory~\citep{tsybakov2008nonparametric}. While vanilla transformers fix this
choice via the $\sqrt{d}$ scaling, recent works adapt the effective
bandwidth as a function of the sequence length $n$ to control attention
dispersion and long-context behavior~\citep{nakanishi2025scalable,
vasylenko2025longcontextgeneralizationsparseattention,
zhang2025memory2}.
Furthermore, this kernel regression view suggests a natural generalization, raising the following question: \textit{``what if we use a different kernel than the Gaussian?''}.

\paragraph{Beyond Gaussian kernels.}
Classical kernel regression is not restricted to Gaussian kernels. A wide range of kernels has been studied, differing in smoothness and support properties. Of particular interest are kernels with \emph{compact support}, which assign zero weight to points beyond a certain distance and therefore induce locality and sparsity.

Common examples include the \textit{Epanechnikov kernel}
$
K(\bm{u}) \propto [1 - \|\bm{u}\|^2]_+$ \citep{epanechnikov},  
which has optimal properties in terms of minimizing mean integrated squared error (MISE), 
as well as its higher-order polynomial variants, including the \textit{biweight}
$
K(\bm{u}) \propto [1 - \|\bm{u}\|^2]_+^2
$
and \textit{triweight} kernels
$
K(\bm{u}) \propto [1 - \|\bm{u}\|^2]_+^3
$~\citep{scott2015multivariate}. These kernels have restricted support to a bounded neighborhood around the query, as illustrated in Figure~\ref{fig:overview}.

Another way to enforce compact support is to truncate the Gaussian or uniform kernel to the $k$ nearest neighbors:
\begin{equation}
\label{eq:topk-kernel}
\begin{aligned}
K_{\text{top-}k}(\bm{u}) &\propto
\begin{cases}
\exp\!\big(-\tfrac{1}{2}\|\bm{u}\|^2 / h^2\big), & \|\bm{u}\| \le \delta_{\max},\\
0, & \text{otherwise,}
\end{cases}
\end{aligned}
\quad
\begin{aligned}
K_{\text{uniform}}(\bm{u}) &\propto
\begin{cases}
1, & \|\bm{u}\| \le \delta_{\max},\\
0, & \text{otherwise.}
\end{cases}
\end{aligned}
\end{equation}
where the threshold $\delta_{\max}$ is implicitly determined by the set $\mathcal{N}_k(\bm{q})$ of the $k$ nearest neighbors of $\bm{q}$ in feature space, i.e., $\|\kk_i - \bm{q}\| \le \delta_{\max}$ if and only if $\kk_i \in \mathcal{N}_k(\bm{q})$.

Epanechnikov, biweight, triweight, top-$k$ Gaussian, and uniform kernels are all examples of kernels with compact support. As we show in \S\ref{sec:sparse_attention}, these classical kernel choices give rise to different sparse attention mechanisms, providing a unified kernel-based perspective on locality and sparsity in attention. We illustrate these kernels in Figure~\ref{fig:overview}.

\subsection{Memory Mosaics}

\textit{Memory Mosaics} \citep{zhang2025memory} is an associative memory variant of the transformer architecture designed to perform prediction tasks using kernel regression; this architecture has recently shown to exhibit good performance at reasonable scales \citep{zhang2025memory2}. We base our work and experiments on this architecture as it \textbf{explicitly implements a Nadaraya-Watson estimator}, as described in \eqref{eq:update}, unlike vanilla transformers. Specifically, in Memory Mosaics, the query and key are \textit{the same} by design, aligning exactly with kernel regression: $\kk_n$ serves both as a query at the $n\textsuperscript{th}$ time step to determine the weighting of past $(n-1)$ stored values, and as a key at subsequent time steps. Each memory unit contains a trainable feature extractor that computes keys and values from input vectors $\bm{x}_i \in \mathbb{R}^{d_z}$. 
\begin{equation}
\setlength{\jot}{2pt} % tighten row spacing (optional)
\begin{aligned}
\text{Keys:} \quad 
& \kk_n = \varphi(\bm{x}_n, \bm{x}_{n-1}, \dots; \Theta)
& \quad
\tilde{\kk}_n = W_{\varphi}\bm{z}_n
& \quad
\kk_n = \mathrm{Norm}(\tilde{\kk}_n + \lambda_{\varphi} \overline{\kk}_{n-1}) \\
\text{Values:} \quad 
& \vv_n = \psi(\bm{x}_{n+1}, \bm{x}_n, \bm{x}_{n-1}, \dots; \Theta)
& \quad
\tilde{\vv}_n = W_{\psi}\bm{z}_n
& \quad
\vv_n = \mathrm{Norm}(\tilde{\vv}_n + \lambda_{\psi} \tilde{\vv}_{n+1})
\end{aligned}
\label{eq:combined}
\end{equation}
Here, $\kk_n, \vv_n \in \mathbb{R}^{d}$ denote the key and value representations, and $\Theta$ represents learnable parameters. 
The key $\kk_n$ depends only on current and past embeddings $(\bm{x}_i)_{i \le n}$, whereas the value $\vv_n$ incorporates a one-step lookahead to capture predictive structure. 
The projections use linear mappings $W_\varphi, W_\psi \in \mathbb{R}^{d \times d_z}$ followed by leaky averaging with coefficients $\lambda_\varphi, \lambda_\psi$, and normalization. 
The term $\overline{\kk}_{n-1}$ denotes the running average of past keys.

In the Memory Mosaics architecture, each standard transformer block is replaced by two memories: a \textit{contextual} memory unit, which applies attention via kernel regression using~\eqref{eq:update} and~\eqref{eq:combined}, and a \textit{persistent} memory module. 
The persistent memory, inspired by earlier memory-augmented networks~\citep{sukhbaatar}, consists of learned memory slots that store information across sequences, providing a form of long-term associative memory. Conceptually, it functions similarly to self-attention over fixed memory slots and can replace a standard feed-forward network by offering additional capacity and nonlinear processing. The attention mechanism excludes the main diagonal as opposed to vanilla transformers, making use of \eqref{eq:update} where the sum runs only up to $n-1$ rather than $n$, which is why values can ``peek'' one step into the future in \eqref{eq:combined} to compensate. In this work, we focus on adapting the contextual memory unit to incorporate sparsity via different kernel choices while retaining the Nadaraya-Watson estimation principle.

\section{Sparse Attention via Compact Kernels}
\label{sec:sparse_attention}

We now formalize the connection between sparse attention mechanisms and kernel regression with compact support kernels. 
Starting from the kernel view of attention described in \S\ref{subsec:attention_and_kernel_methods}, we investigate how replacing the Gaussian kernel implicit in softmax attention with compact-support kernels naturally induces locality and sparsity.

\paragraph{Normalized ReLU and Epanechnikov kernel.}

Consider the Epanechnikov kernel mentioned earlier in \S\ref{subsec:attention_and_kernel_methods}, 
a compactly supported kernel widely used in kernel density estimation.
Replacing the Gaussian kernel with the Epanechnikov kernel, yields the following query-key interaction and normalized attention rule:
\begin{equation}
\begin{aligned}
\begin{aligned}
K(\kk_i - \bm{q})
&= \big[\kk_i^\top \bm{q}/\gamma + b\big]_+ 
\end{aligned}
\quad \Longrightarrow \quad
\begin{aligned}
\hat{\vv}_n(\bm{q})
&= \sum_{i=1}^{n-1}
\frac{\big[\kk_i^\top \bm{q}/\gamma + b\big]_+}
     {\sum_{j=1}^{n-1} \big[\kk_j^\top \bm{q}/\gamma + b\big]_+}\,
\vv_i .
\end{aligned}
\end{aligned}
\label{eq:normrelu_attention}
\end{equation}
with $\gamma = h^2/2$ and $b = 1 - 2/h^2$. Thus, \textit{normalized ReLU attention is equivalent to Epanechnikov kernel regression}, with sparsity arising from the ReLU truncation. 
This connection was also noted by \cite{hoover2025dense}, who used it to derive an Hopfield-like energy function.
A limitation of \eqref{eq:normrelu_attention} is that the attention weights are undefined when all pre-activations are negative, leading to $\frac{0}{0}$. 
Next, we show two principled ways to avoid this degeneracy: (i) \emph{auto-normalization} (adaptive bandwidth) leading to sparsemax and $\alpha$-entmax, and (ii) \emph{anchoring the support} at nearest keys, leading to ReLUmax.

\subsection{Auto-normalization (Adaptive Bandwidth)}

An alternative to the normalization \eqref{eq:normrelu_attention} is to choose a \textit{data-dependent bandwidth} $h$ which ensures the denominator in the Nadaraya-Watson estimator is a constant---we call this \textit{auto-normalization}. 
We show next that this procedure yields attention transformations such as sparsemax and $\alpha$-entmax.

We begin with sparsemax and show how it arises from auto-normalized Epanechnikov regression. Since we have normalized queries and keys, we can rewrite the squared distance as
$
\frac{1}{2}\|\kk_i - \bm{q}\|^2 = 1 - \kk_i^\top \bm{q}
$. 
Then, introducing a temperature $\gamma$, the denominator in \eqref{eq:kernel} can be rewritten as
\begin{align}
\sum_{i=1}^{n-1}\left[1 - \frac{\|\kk_i - \bm{q}\|^2}{h^2}\right]_+
&= \left(\frac{2\gamma}{h^2}\right)
\sum_{i=1}^{n-1}
\left[\frac{h^2}{2\gamma} - \frac{1}{\gamma} + \frac{\kk_i^\top \bm{q} }{\gamma}\right]_+^.
\label{eq:epanechnikov_rewrite}
\end{align}
% with $\gamma = h^2/2$ and $b = 1 - 2/h^2$.  
% That is, the induced kernel takes the form of a rectified polynomial kernel of order $r$.
% encompassing classical kernels such as the Epanechikov ($r=1$), biweight ($r=2$), and triweight ($r=3$). 
In Nadaraya-Watson estimation, the denominator enforces that the weights sum to one. 
An alternative is to absorb this normalization into the kernel scale such that, rather than fixing $h$, we choose it 
so that the rectified responses sum to one. 
Concretely,
\begin{align}
\begin{aligned}
\sum_{i=1}^{n-1}
\left[\frac{\kk_i^\top \bm{q}}{\gamma} - \underbrace{\left(\frac{1}{\gamma} - \frac{h^2}{2\gamma}\right)}_{:= \tau}  \right]_+ = 1 
\end{aligned}
\quad \Longrightarrow \quad
\begin{aligned}
\hat{\vv}_n(\bm{q})
= \sum_{i=1}^{n-1}
\text{sparsemax}_i\!\left(\frac{\K\bm{q}}{\gamma}\right)\vv_i,
\end{aligned}
\label{eq:sparsemax_kernel_regression}
\end{align}
showing that Epanechnikov kernel regression with auto-normalization is equivalent to sparsemax attention. The effective bandwidth is therefore \emph{adaptive} $h = \sqrt{2 - 2\gamma\tau}$,
where $\tau$ (hence $h$) is determined by the attention scores via \eqref{eq:epanechnikov_rewrite}, which depend only on the keys (but not the values). 
This contrasts with bandwidth schedules that depend only on sequence length~\citep{zhang2025memory2}.
It is also interesting to compare \eqref{eq:sparsemax_kernel_regression} with \eqref{eq:update}: in the softmax attention induced by the Gaussian kernel, we have $h = \sqrt{\gamma}$. 

\paragraph{From sparsemax to $\alpha$-entmax via compact kernels.}
Instead of the linear truncation used by sparsemax, $\alpha$-entmax introduces a higher-order rectification controlled by a parameter $\alpha>1$, producing attention weights with an implicitly defined threshold $\tau$. On the kernel side, this corresponds to rectified polynomial kernels of the form $K_h(\bm{u}) \propto [1-\|\bm{u}\|^2/h^2]_+^r$, where $r = 1/(\alpha-1)$. This family recovers Epanechnikov ($r=1$), biweight ($r=2$), and triweight ($r=3$) as special cases. Proposition~\ref{prop:entmax_rectified_poly} formalizes this correspondence, and the proof is given in Appendix~\ref{proof:main_proof}.
\begin{proposition}\label{prop:entmax_rectified_poly}
Let $r\ge1$ and $\alpha=1+\frac{1}{r}$.  
Let $\bm{K}=[\kk_1,...,\kk_{n-1}]^\top\in\mathbb{R}^{(n-1)\times d}$ be normalized keys, $\bm{q} \equiv \bm{k}_n\in\mathbb{R}^d$ the query, and 
$\bm{V}=[\vv_1,...,\vv_n]^\top$ the  values.  
The $\alpha$-entmax attention with temperature $\gamma$ satisfies
\begin{align} \label{eq:entmax_proposition_1}
\sum_{i=1}^{n-1} \alpha\text{-entmax}_i\!\left(\frac{\bm{K}\bm{q}}{\gamma}\right)\vv_i
=
\frac{\sum_{i=1}^{n-1} K_h(\kk_i - \bm{q})\,\vv_i}
{\sum_{i=1}^{n-1} K_h(\kk_i - \bm{q})},
\end{align}
where $K_h(\bm{u}) \propto \left[1 - \frac{\|\bm{u}\|^2}{h^2}\right]_+^r$ and $h$ is implicitly defined by $h = \sqrt{2 - 2r\gamma \tau}$. 
\end{proposition}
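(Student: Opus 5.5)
The plan is to mirror the sparsemax derivation in \eqref{eq:epanechnikov_rewrite}--\eqref{eq:sparsemax_kernel_regression}: rewrite each kernel weight as a rescaled entmax-type rectified power of the score, then show that the auto-normalization condition defining $h$ coincides with the normalization condition defining $\tau$ in \eqref{eq:entmax}.

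\textbf{Step 1: express the kernel in terms of scores.} Since $\|\kk_i\|=\|\bm{q}\|=1$, we have $\|\kk_i-\bm{q}\|^2 = 2-2\kk_i^\top\bm{q}$. Hence
\[
1-\frac{\|\kk_i-\bm{q}\|^2}{h^2} = \frac{1}{h^2}\bigl(h^2-2+2\kk_i^\top\bm{q}\bigr).
\]
Substituting $h^2 = 2-2r\gamma\tau$ gives $h^2-2 = -2r\gamma\tau$. Using $r=1/(\alpha-1)$, so that $2r\gamma = 2\gamma/(\alpha-1)$, the bracket becomes
\[
\frac{2r\gamma}{h^2}\Bigl((\alpha-1)\frac{\kk_i^\top\bm{q}}{\gamma}-\tau\Bigr).
\]
The common factor $2r\gamma/h^2$ is positive, provided $h^2>0$ (discussed below), so the positive part commutes with it. Raising to the power $r$ then yields
\[
K_h(\kk_i-\bm{q}) = C\,\Bigl[(\alpha-1)\frac{\kk_i^\top\bm{q}}{\gamma}-\tau\Bigr]_+^{\frac{1}{\alpha-1}},
\]
where $C = c\,h^{-d}\,(2r\gamma/h^2)^r$ absorbs the normalizing constant $c$ of the kernel. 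The key point is that $C$ does not depend on $i$.

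\textbf{Step 2: match the normalizations.} Choose $\tau$ as the entmax threshold for the score vector $\bm{z} = \K\bm{q}/\gamma$, that is, the unique $\tau$ with
\[
\sum_i \bigl[(\alpha-1)z_i-\tau\bigr]_+^{1/(\alpha-1)} = 1.
\]
This $\tau$ exists and is unique because the left-hand side is continuous and strictly decreasing in $\tau$ wherever it is positive, going from $+\infty$ to $0$. Then define $h$ from $\tau$ by the stated formula. With this choice, $K_h(\kk_i-\bm{q}) = C\cdot \alpha\text{-entmax}_i(\bm{z})$ for every $i$. The denominator on the right-hand side of \eqref{eq:entmax_proposition_1} is therefore $C$, and the constant $C$ cancels, which gives the identity. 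Equivalently, this is the auto-normalization viewpoint: the rectified responses sum to one, exactly as in the $r=1$ case \eqref{eq:sparsemax_kernel_regression}, which this argument recovers.

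\textbf{Main obstacle: showing that $h$ is well defined.} We need $h^2 = 2-2r\gamma\tau > 0$ for the rescaling in Step 1 to be legitimate. To check this, I will bound $\tau$. At least one entry is positive and the entries sum to one, so $(\alpha-1)z_{\max}-\tau > 0$. This gives
\[
\tau < (\alpha-1)z_{\max} \le \frac{\alpha-1}{\gamma},
\]
using $z_{\max}\le 1/\gamma$ because $\kk_i^\top\bm{q}\le 1$. Multiplying by $r\gamma$ gives $r\gamma\tau < 1$, and hence $h^2>0$. The remaining details are that the proportionality constant $C$ is finite and nonzero, and that the positive part commutes with a positive scalar. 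Both are routine.
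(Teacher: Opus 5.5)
Your proposal is correct and follows essentially the same route as the paper's proof. You use $\|\kk_i-\bm{q}\|^2 = 2-2\kk_i^\top\bm{q}$ to rewrite each kernel weight as $(2r\gamma/h^2)^r\bigl[\kk_i^\top\bm{q}/(r\gamma)-\tau\bigr]_+^r$, and you identify the auto-normalization condition with the $\alpha$-entmax threshold equation. Your explicit check that $h^2=2-2r\gamma\tau>0$, via $\tau<(\alpha-1)z_{\max}\le(\alpha-1)/\gamma$, is a detail the paper leaves implicit; it is what licenses pulling the positive factor out of $[\cdot]_+$ and guarantees that the adaptive bandwidth $h$ exists.
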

Proposition~\ref{prop:entmax_rectified_poly} yields a continuous family of compact-support kernel attention mechanisms indexed by $\alpha$. 
In particular, $r=2$ corresponds to $1.5$-entmax and the biweight kernel, while $r=3$ corresponds to $\tfrac{4}{3}$-entmax and the triweight kernel, recovering compact-kernel associative memories studied in Hopfield networks and related energy-based models~\citep{wu2024stanhop,santos2024sparse, santos2024hopfieldfenchelyoungnetworksunifiedframework}. Figure~\ref{fig:overview} summarizes these correspondences between classical compact kernels and sparse attention transformations.

The rectified polynomial family is only one way to obtain compact support. A second, complementary route is to \emph{anchor} sparsity directly on the nearest keys by restricting attention to a neighborhood defined by kNN selection, i.e., by truncating an underlying kernel to a data-dependent support set. 
We turn to these anchored compact kernels next.

\subsection{Anchored Compact Kernels}
\label{subsec:anchored_compact_kernels}

In this second route, instead of changing the kernel shape, we keep a base kernel and make its \emph{support set} data-adaptive, either by truncation (top-$k$) or by max-anchoring (ReLUmax). 

\paragraph{Top-$k$ average pooling (uniform-kNN).}
Consider the uniform kernel in \eqref{eq:topk-kernel}. 
If we choose $\delta_{\max}$ so that exactly $k$ keys satisfy $\|\bm{k}_i-\bm{q}\|\le \delta_{\max}$, then the support set is precisely the $k$-nearest-neighbor set $\mathcal{N}_k(\bm{q})$. Because all selected keys receive identical weight under the uniform kernel, the Nadaraya-Watson estimate reduces to $\hat{\bm{v}}(\bm{q}) = \frac{1}{k}\sum_{i \in \mathcal{N}_k(\bm{q})} \bm{v}_i$. Thus, top-$k$ average pooling is equivalent to uniform-kNN kernel regression with an adaptively chosen radius that admits exactly $k$ neighbors.

\paragraph{Top-$k$ softmax (truncated Gaussian).}  

A smoother, anchored alternative is to use a Gaussian kernel within the $k$-nearest-neighbor set and set all weights to zero outside it. This corresponds to the truncated kernel in \eqref{eq:topk-kernel}. 
Normalizing these weights yields
\begin{align}
\hat{\vv}(\bm{q})
    &= \sum_{i \in \mathcal{N}_k(\bm{q})}
       \frac{K_{\text{top-}k}(\kk_i - \bm{q})}
            {\sum_{j \in \mathcal{N}_k(\bm{q})}
                   K_{\text{top-}k}(\kk_j - \bm{q})}
       \;\vv_i .
\end{align}
This mechanism can be interpreted as Gaussian kernel regression with a data-adaptive compact support set, given by $\mathcal{N}_k(\bm{q})$, yielding a sparse kNN-style attention mechanism~\citep{gupta-etal-2021-memory}. 

\paragraph{$r$-ReLUmax.}
The top-$k$ constructions above enforce compact support by \emph{hard} truncation via kNN selection. 
We consider a new transformation,  $r$-ReLUmax, which anchors support \emph{softly} at the best-matching key while retaining a compact kernel and avoiding the empty-support degeneracy of normalized ReLU.
Let
% $\Delta := \min_j \|\kk_j - \bm{q}\|^2$
$m := \max_j \bm{k}_j^\top\bm{q}$ and $b > 0 \in \mathbb{R}$ be a hyperparameter. The $r$-ReLUmax kernel and the corresponding Nadaraya-Watson estimate are
\begin{align}
K_{\text{$r$-relumax}}(\bm{k}_i-\bm{q})
&\propto
\Big[
b + \frac{\bm{k}_i^\top \bm{q} - m}{h^2}
\Big]_+^r
\quad \Longrightarrow \quad
\hat{\bm{v}}(\bm{q})
=
\sum_{i=1}^{n-1}
\frac{
\left[
b + \frac{\bm{k}_i^\top \bm{q} - m}{h^2}
\right]_+^r
}{
\sum_{k=1}^{n-1}
\left[
b + \frac{\bm{k}_k^\top \bm{q} - m}{h^2}
\right]_+^r
}
\;\bm{v}_i .
\label{eq:r-relumax_kernel}
\end{align}
Importantly, since $b > 0$, in the worst case where all attention scores equal $m$, at least one rectified score will be strictly positive, and so the denominator in \eqref{eq:r-relumax_kernel} cannot be zero. Hence, ReLUmax avoids the $0/0$ degeneracy of normalized ReLU \eqref{eq:normrelu_attention}  while preserving compact support.\looseness=-1
Furthermore, we note that  \eqref{eq:r-relumax_kernel} yields an explicit characterization of the active set:
\begin{align}
K_{\text{$r$-relumax}}(\bm{k}_i-\bm{q})>0
\quad&\Longleftrightarrow\quad
b + \frac{\bm{k}_i^\top\bm{q} - m}{h^2} > 0
&\Longleftrightarrow\quad
\bm{k}_i^\top\bm{q} > m - b h^2.
\label{eq:relumax_margin_rule}
\end{align}
Thus, ReLUmax selects \emph{all keys within a fixed similarity margin of the maximum similarity}. This is fundamentally different from top-$k$ since the number of active keys adapts to the distribution of score gaps rather than being fixed.
In fact, this points to a \emph{top-$p$} viewpoint. 
Top-$k$ fixes the \emph{cardinality} of the support, whereas ReLUmax fixes a \emph{margin-to-maximum} threshold, with keys being kept whenever their score exceeds $m-bh^2$. 
Consequently, the retained set expands when many keys have scores close to the maximum (small gaps) and shrinks when the maximum is well separated (large gaps). 
In this sense, ReLUmax behaves like a continuous analogue of nucleus selection as it adaptively includes ``as many keys as needed'' near the top of the score distribution, but does so via a smooth, differentiable margin.

\section{Experiments}
\begin{figure*}[ht!]
    \centering
    \includegraphics[width=1\linewidth]{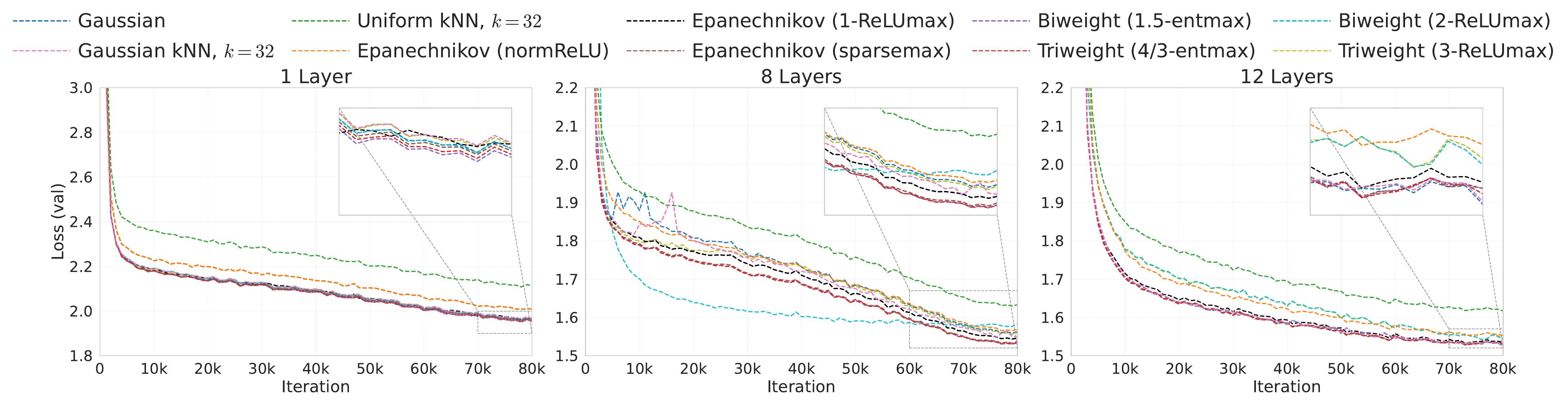}
    \caption{\textbf{Validation loss of Memory Mosaic models with different kernels.} Comparison on the BabiStories dataset for varying model depths. The horizontal axis shows the number of training iterations.}

    \label{fig:train_val}
\end{figure*}
We conduct three sets of experiments to validate our theoretical findings and demonstrate the practical benefits of our kernel-based attention mechanisms in Memory Mosaics.

% \paragraph{Why Memory Mosaics?}
% In summary, Memory Mosaics exactly matches the kernel-regression setting studied here. The difference from a causal transformer is small but important: standard causal self-attention attends over positions $i \le n$, whereas Memory Mosaics attend strictly over past memories $i<n$. 
% This exactly matches the Nadaraya-Watson estimator in Eq.~\eqref{eq:kernel}.
% % where the query $\bm{k}_n$ is evaluated against a stored dataset $\{(\bm{k}_i,\bm{v}_i)\}_{i<n}$ and is not one of its own neighbors. 
% Thus, Memory Mosaics provide a controlled testbed for isolating the effect of the kernel choice, complementary to prior transformer-based evidence that sparse attention mechanisms such as $\alpha$-entmax are effective~\citep{gonccalves2026adasplash}.

\subsection{Language Modeling}
\label{sec:language_modeling}
\paragraph{Setup.}
We replicate Memory Mosaics \citep{zhang2025memory} with our proposed  kernel-based sparse attention mechanisms 
% To evaluate the practical impact of different variants, w
and conduct language modeling experiments on small, self-contained narratives written in simple English, designed to be comprehensible to a young child. 
Specifically, we follow the methodology of  \cite{li2024tinystories} and \cite{zhang2025memory}, and evaluate on BabiStories.
To study the effect of different attention mechanisms, we replace the standard softmax attention used in Memory Mosaics \citep{zhang2025memory} with the family of kernel-based methods described in \S\ref{sec:sparse_attention}. 
Namely, we consider Gaussian (softmax) attention and compact rectified polynomial kernels: Epanechnikov ($r{=}1$) via normalized ReLU, sparsemax, and 1-, 2-, and 3-ReLUmax ($b{=}1$), and higher-order biweight ($r{=}2$) and triweight ($r{=}3$), which correspond to $\alpha$-entmax with $\alpha{=}1.5$ and $\alpha{=}\tfrac{4}{3}$, respectively. 
In addition to $r$-order rectified polynomial kernels, we evaluate discrete uniform and Gaussian kNN attention. 
Experimental details can be found in \S\ref{subsec:app_lm}. For 2-ReLUmax and 8 layers, a slightly lower learning rate was required due to training instabilities.

\paragraph{Discussion.}
Figure~\ref{fig:train_val} shows a comparison of models equipped with different kernel-based attention mechanisms across model depths in terms of validation loss. 
For shallow models (one layer), compactly supported kernels, corresponding to adaptive bandwidth, achieve lower validation loss than dense Gaussian attention and top-$k$ baselines.
As depth increases to 8 layers, this advantage remains, and 
rectified polynomial kernels with adaptive bandwidth outperform dense softmax attention and anchored top-$k$ variants across iterations. 
With 12 layers, the gap is narrower and, except normReLU and uniform kNN, all approaches achieve roughly the same validation loss.
Interestingly, while top-$k$ Gaussian attention can reach competitive validation loss at intermediate depths, it exhibits mild instability, likely due to the non-smoothness introduced by discrete neighbor selection. 
Among sparse mechanisms, auto-normalized attention mappings---corresponding to sparsemax and $\alpha$-entmax---consistently perform better than normReLU. While $r$-ReLUmax outperforms normReLU with 8 and 12 layers,
it slightly underperforms adaptive normalization methods. We use AdaSplash \citep{goncalves2025adasplash} for efficiency, with compact kernels, and discuss the efficiency of all methods in \S\ref{app:efficiency}.

\subsection{In-Context Learning}

\paragraph{Setup.} We also evaluate our models on an in-context learning task. To rigorously compare the in-context learning capabilities of different architectures, we adopt the RegBench benchmark \citep{akyürek2024incontextlanguagelearningarchitectures}, a diagnostic suite designed to probe sequence models’ ability to infer the generative processes of regular languages, which constructs random artificial languages defined by a PFA with sequences consisting of 10 to 20 strings sampled from the same PFA. 
We compare baseline transformers and kernelized variants of our sparse Memory Mosaic architectures, as well as several established baselines including S4 \citep{gu2022efficiently}, Gla \citep{yang2023gated}, RetNet \citep{sun2023retentive}, H3 \citep{fu2023hungry}, Mamba \citep{mamba}, and standard transformers \citep{vaswani2017attention}.
Our models match the sequence length and vocabulary size of \cite{akyürek2024incontextlanguagelearningarchitectures} and follow the same hyperparameter tuning strategy. 
Full experimental details are provided in \S\ref{app:icl_experimental_details}.
Each model is trained on datasets ranging from 1k to 40k examples and evaluated using per-token loss as a function of the generated token position for a test set of 1k examples.
\begin{table*}[t!]
\centering
\small
\setlength{\tabcolsep}{6pt}
\caption{\textbf{Performance comparison across varying training set sizes in the Regbench in-context learning benchmark.} Accuracy ($\uparrow$) measures the correct prediction rate, and TVD ($\downarrow$) measures divergence from the target distribution. We \textbf{bold} the best performing models.}
\renewcommand{\arraystretch}{1.2}
\resizebox{\textwidth}{!}{
\begin{tabular}{l
    *{6}{c c}
}
\toprule
\multirow{2}{*}{\textbf{Method}} & 
\multicolumn{2}{c}{\textbf{1k}} &
\multicolumn{2}{c}{\textbf{2.5k}} &
\multicolumn{2}{c}{\textbf{5k}} &
\multicolumn{2}{c}{\textbf{10k}} &
\multicolumn{2}{c}{\textbf{20k}} &
\multicolumn{2}{c}{\textbf{40k}} \\
\cmidrule(lr){2-3} \cmidrule(lr){4-5} \cmidrule(lr){6-7} 
\cmidrule(lr){8-9} \cmidrule(lr){10-11} \cmidrule(lr){12-13}
& Acc $\uparrow$ & TVD $\downarrow$
& Acc $\uparrow$ & TVD $\downarrow$
& Acc $\uparrow$ & TVD $\downarrow$
& Acc $\uparrow$ & TVD $\downarrow$
& Acc $\uparrow$ & TVD $\downarrow$
& Acc $\uparrow$ & TVD $\downarrow$ \\
\midrule
RetNet & 51.3 & 67.4 & 50.5 & 63.9 & 81.5 & 33.3 & 88.4   & 25.7 & 90.8 & 22.5 & 92.6& 18.9\\
H3 & 51.5 & 65.1 & 58.0 & 57.0 & 70.9 & 47.4 & 76.9 & 41.4 & 80.7 & 36.3& 87.7 & 25.4 \\
Gla & 51.8 & 65.3 & 51.1 & 62.2 & 82.1 & 31.0 & 90.2 & 24.5 & 91.2 & 21.2& 92.9 & 19.0 \\
S4 & 52.1 & 62.8 & 51.7 & 62.2 & 54.5 & 59.0 & 63.3 & 52.9 & 72.1 & 45.8 &82.5 & 33.8\\
Mamba & 69.4 & 51.0 & 79.4 & 41.0 & 88.7 & 30.6 & 90.5 &  28.4& 91.3 & 27.0 & 91.5 & 27.0 \\
Transformer & 54.5 & 62.6 & 92.4 & 20.5 & 92.7 & 18.9 &  94.8 & 15.7 & 95.1 & 15.0 & 95.5 & 13.9\\
\midrule
\multicolumn{13}{l}{\textbf{Memory Mosaics:}} \\
Gaussian & 93.9 & 34.1 & 93.7 & 20.8 & 94.6 & 17.7 & 95.0 & 15.9 & \textbf{95.4} & 15.3 & 95.4 & 14.2\\
Gaussian kNN, $k=16$  & 93.4 & 33.3 & 93.7 & 19.8 & 94.4 & 17.2 & 94.8 & 16.0 & 94.8 & 16.0 & 95.0 & 14.9 \\
Uniform kNN, $k=16$ &84.6&41.1&85.6&28.3&86.1&27.2&86.2&26.4&86.5&26.4&86.4&26.4 \\
Epanechnikov (normReLU)&84.3&43.6&87.0&29.6&90.1&24.6&91.6&21.7&93.1&19.4&93.3&18.6  \\
Epanechnikov (1-ReLUmax)& 93.8&36.8&94.2&20.4&94.3&17.8&94.9&\textbf{15.7}& \textbf{95.4}&\textbf{15.0} &95.4 &14.1 \\
Epanechnikov (sparsemax) & \textbf{94.2} & \textbf{32.5} & 94.0 & 19.3 & 94.4 & 17.1 & 94.9 & 15.9 & 95.0 & \textbf{15.0} &  95.4&14.0 \\
Biweight (2-ReLUmax)& 93.8 & 36.4 & 94.4 & 20.1&94.6& 17.2&95.1 &16.1 & 95.3 &15.1&95.1&14.8\\
Biweight (1.5-entmax) & 93.8 & 33.0 & \textbf{94.3} & 19.2 & 94.7 & 16.6 & \textbf{95.2} &  \textbf{15.7}& 95.0 & 15.8 & \textbf{95.5} & \textbf{13.8} \\
Triweight (3-ReLUmax) & 93.9 & 36.6&94.2&20.5&94.7&16.9&94.8 &16.4&95.1 &15.4&95.3 &14.3 \\
Triweight (4/3-entmax) & 93.9 & 32.9 & \textbf{94.3} & \textbf{19.0} & \textbf{94.8} & \textbf{16.2} & 94.9 & 16.1 & 95.1 & 15.3 & 95.3 & 14.4 \\
\bottomrule
\end{tabular}}
\label{tab:icl}
\end{table*}
\paragraph{Discussion.}
Table~\ref{tab:icl} compares models on RegBench using last-token accuracy and total variation distance (TVD). Among the Memory Mosaic variants, rectified polynomial kernels (Triweight, Biweight, and Epanechnikov) with $\alpha$-entmax consistently match or slightly outperform the Gaussian kernel in the low-data regime (1k–5k), achieving both higher accuracy and lower TVD. The adaptive bandwidth family ($\alpha$-entmax) performs best overall as compared to max-anchoring ($r$-ReLUmax). In contrast, normalized ReLU shows a clear degradation in both metrics, suggesting that simple thresholded normalization without adaptive support selection is insufficient in this setting. 
Fixed top-$k$ uniform attention underperforms across all data scales, highlighting the limitations of hard, data-independent sparsity compared to kernel-based adaptive weighting. 
Overall, these results indicate that adaptive sparsity induced by compact kernels is most beneficial in data-scarce regimes.

\subsection{Length Generalization}

\paragraph{Setup.} We evaluate the length generalization capabilities of our Memory Mosaics using synthetic tasks, following the setup of \cite{vasylenko2025longcontextgeneralizationsparseattention}, such as MQMTAR (multi-query multi-token associative recall), sequence sorting, and reversing, which provide precise control over training and test sequence lengths. These tasks test the model’s ability to maintain attention on relevant, sparse subsets of tokens, and probe the model's ability to learn an underlying length-agnostic algorithm,
rather than memorizing patterns. 
Full experimental details are provided in \S\ref{app:synthetic_experimental_details}. We note that Memory Mosaics does not rely on positional embeddings. For completeness, we also evaluate a variant combining the Epanechnikov kernel with ALiBi positional bias \citep{press2022train}. As shown in \S\ref{app:alibi}, this modification leads to a consistent degradation in performance, motivating our decision to omit positional embeddings in the final model.
 
\paragraph{Discussion.}
The results in Table~\ref{tab:LG_results} show that Gaussian attention performs well in-distribution but quickly deteriorates when extrapolating to long sequences. 
Top-$k$ uniform attention is too crude, failing to extrapolate for any of the tasks. 
Gaussian kNN mitigates some degradation by limiting attention to nearest neighbors, yet it still achieves poor length extrapolation, especially on very long sequences. 
In contrast, our rectified polynomial kernels ($\alpha$-entmax) exhibit strong and stable generalization, particularly on the MQMTAR task. Triweight and biweight kernels outperform Gaussian and top-$k$ mechanisms on MQMTAR, with triweight also leading on the reverse task, while preserving high accuracy over longer sequences. Epanechnikov-based attention mechanisms, specifically 1-ReLUmax and sparsemax, consistently achieve perfect in-distribution accuracy while maintaining superior length extrapolation performance, with sparsemax showing better generalization performance.
Finally, 1-ReLUmax outperforms normReLU (for $r=1$), benefiting from its max-anchoring operation, with improved stability at longer sequence lengths.
\begin{table*}[t]
\caption{Exact match accuracy on representative synthetic tasks. In-distribution results ($n=64$) and out-of-distribution performance at increasing sequence lengths are reported. Values show the mean across three seeds, with the maximum across seeds indicated in superscript. Best average performance is in \textbf{bold} and overall maximum performance is \underline{underlined}. $L$ is the number of layers.}
  \small
  \label{tab:LG_results}
  \centering
   \setlength{\tabcolsep}{2pt}
   \resizebox{\textwidth}{!}{
  \begin{tabular}{l ccccccc cc ccc}
    \toprule
    & \multicolumn{7}{c}{MQMTAR $(L=4)$} 
    & \multicolumn{2}{c}{Reverse $(L=4)$}
    & \multicolumn{3}{c}{Sort $(L=2)$} \\
    \cmidrule(lr){2-8} \cmidrule(lr){9-10} \cmidrule(lr){11-13}
    \bf Kernel & 
    $1\times$ & $2\times$ & $4\times$ & $8\times$ & $16\times$ & $32\times$ & $64\times$ &
    $1\times$ & $1.5\times$ &
    $1\times$ & $2\times$ & $4\times$ \\
    
    \midrule 
    
Transformer & \textbf{1.00}$^{(\underline{1.00})}$ &\textbf{1.00}$^{(\underline{1.00})}$&
0.99$^{(0.99})$& 0.09$^{(0.15})$&	 0.01$^{(0.01)}$ & 0.00$^{(0.00)}$& 0.00$^{(0.00)}$ & \textbf{1.00}$^{(\underline{1.00})}$ & \textbf{0.52}$^{(0.62)}$ & \textbf{1.00}$^{(\underline{1.00})}$ & 0.00$^{(0.00)}$ & 0.00$^{(0.00)}$  \\

Gaussian & 0.99$^{(\underline{1.00})}$ & 0.96$^{(\underline{1.00})}$ & 0.93$^{(0.99)}$ & 0.85$^{(0.97)}$ & 0.64$^{(0.81)}$ & 0.14$^{(0.20)}$ & 0.00$^{(0.00)}$ & \textbf{1.00}$^{(\underline{1.00})}$ & 0.14$^{(0.38)}$ & \textbf{1.00}$^{(\underline{1.00})}$ & 0.06$^{(0.17)}$ & 0.00$^{(0.00)}$ \\

Gaussian kNN, $k=32$ & 0.99$^{(\underline{1.00})}$ & 0.97$^{(\underline{1.00})}$ & 0.93$^{(0.99)}$ & 0.84$^{(0.96)}$ & 0.68$^{(0.85)}$ & 0.27$^{(0.36)}$ & 0.05$^{(0.05)}$ & 0.99$^{(\underline{1.00})}$ & 0.00$^{(0.00)}$ & \textbf{1.00}$^{(\underline{1.00})}$ & 0.80$^{(0.91)}$ & 0.01$^{(0.02)}$ \\

Uniform kNN, $k=32$ & 0.04$^{(0.08)}$ & 0.00$^{(0.00)}$ & 0.00$^{(0.00)}$ & 0.00$^{(0.00)}$ & 0.00$^{(0.00)}$ & 0.00$^{(0.00)}$ & 0.00$^{(0.00)}$ & 0.42$^{(0.66)}$ & 0.00$^{(0.00)}$ & \textbf{1.00}$^{(\underline{1.00})}$ & 0.00$^{(0.00)}$ & 0.00$^{(0.00)}$ \\

Epanechnikov (normReLU) & \textbf{1.00}$^{(\underline{1.00})}$ & \textbf{1.00}$^{(\underline{1.00})}$ & 0.89$^{(\underline{1.00})}$ & 0.66$^{(\underline{1.00})}$ & 0.44$^{(\underline{1.00})}$ & 0.33$^{(\underline{0.98})}$ & \textbf{0.31}$^{(\underline{0.94})}$ & \textbf{1.00}$^{(\underline{1.00})}$ & 0.19$^{(0.56)}$ & \textbf{1.00}$^{(\underline{1.00})}$ & 0.05$^{(0.14)}$ & 0.00$^{(0.00)}$ \\

Epanechnikov (1-ReLUmax) & \textbf{1.00}$^{(\underline{1.00})}$ & \textbf{1.00}$^{(\underline{1.00})}$ & 0.99$^{(\underline{1.00})}$ & 0.92$^{(0.96)}$ & 0.74$^{(0.87)}$ & 0.42$^{(0.59)}$ & 0.12$^{(0.18)}$ & \textbf{1.00}$^{(\underline{1.00})}$ & 0.23$^{(0.68)}$ & \textbf{1.00}$^{(\underline{1.00})}$ & 0.80$^{(\underline{1.00})}$ & 0.30$^{(\underline{0.91})}$ \\

Epanechnikov (sparsemax) & \textbf{1.00}$^{(\underline{1.00})}$ & \textbf{1.00}$^{(\underline{1.00})}$ & \textbf{1.00}$^{(\underline{1.00})}$ & \textbf{0.98}$^{(\underline{1.00})}$ & 0.86$^{(0.98)}$ & \textbf{0.63}$^{(0.84)}$ & \textbf{0.31}$^{(0.51)}$ & \textbf{1.00}$^{(\underline{1.00})}$ & 0.03$^{(0.06)}$ & \textbf{1.00}$^{(\underline{1.00})}$ & 0.92$^{(\underline{1.00})}$ & 0.29$^{(0.79)}$ \\

Biweight (2-ReLUmax) & 0.99$^{(1.00)}$ & 0.97$^{(1.00)}$ & 0.93$^{(1.00)}$ & 0.85$^{(0.96)}$ & 0.69$^{(0.87)}$ & 0.27$^{(0.43)}$ & 0.04$^{(0.06)}$ & \textbf{1.00}$^{(\underline{1.00})}$ & 0.26$^{(0.74)}$ & \textbf{1.00}$^{(\underline{1.00})}$ & 0.93$^{(\underline{1.00})}$ & \textbf{0.34}$^{(0.74)}$ \\

Biweight (1.5-entmax) & \textbf{1.00}$^{(\underline{1.00})}$ & \textbf{1.00}$^{(\underline{1.00})}$ & 0.99$^{(\underline{1.00})}$ & 0.95$^{(0.99)}$ & 0.75$^{(0.93)}$ & 0.51$^{(0.74)}$ & 0.23$^{(0.42)}$ & \textbf{1.00}$^{(\underline{1.00})}$ & 0.05$^{(0.06)}$ & \textbf{1.00}$^{(\underline{1.00})}$ & 0.39$^{(\underline{1.00})}$ & 0.26$^{(0.77)}$ \\

Triweight (3-ReLUmax) & 0.99$^{(\underline{1.00})}$ & 0.97$^{(\underline{1.00})}$ & 0.93$^{(0.99)}$ & 0.85$^{(0.96)}$ & 0.71$^{(0.87)}$ & 0.37$^{(0.58)}$ & 0.08$^{(0.18)}$ & \textbf{1.00}$^{(\underline{1.00})}$ & 0.29$^{(\underline{0.88})}$ & \textbf{1.00}$^{(\underline{1.00})}$ & 0.95$^{(0.97)}$ & 0.30$^{(0.62)}$ \\

Triweight (4/3-entmax) & \textbf{1.00}$^{(\underline{1.00})}$ & \textbf{1.00}$^{(\underline{1.00})}$ & 0.99$^{(\underline{1.00})}$ & 0.97$^{(0.98)}$ & \textbf{0.89}$^{(0.95)}$ & 0.61$^{(0.76)}$ & 0.25$^{(0.45)}$ & \textbf{1.00}$^{(\underline{1.00})}$ & 0.26$^{(0.70)}$ & \textbf{1.00}$^{(\underline{1.00})}$ & \textbf{0.98}$^{(0.99)}$ & 0.07$^{(0.11)}$ \\

    \bottomrule
  \end{tabular}
   }
\end{table*}
\section{Discussion and Related Work}\label{sec:related_work}

\paragraph{Sparse attention mechanisms.}
Sparse attention offers an alternative to dense softmax, enabling models to focus on a subset of relevant tokens.
Early approaches such as top-$k$ softmax~\citep{gupta-etal-2021-memory} enforced sparsity by keeping only the largest attention scores, but suffered from non-smooth gradients and a fixed attention budget. One line of work employs predefined sparse patterns, such as local sliding windows or fixed strides, such as LongFormer~\citep{beltagy_longformer_2020} and BigBird~\citep{zaheer2020bigbird}. 
Other methods introduce end-to-end learnable sparsity by replacing softmax with differentiable alternatives such as sparsemax~\citep{martins2016softmax} and its generalization $\alpha$-entmax~\citep{peters2019sparse}.
More recently, \cite{hu2023sparse},  \cite{wu2024stanhop}, \cite{santos2024sparse,santos2024hopfieldfenchelyoungnetworksunifiedframework}, and \cite{hoover2025dense} develop sparse Hopfield networks---mathematically equivalent to sparse attention mechanisms---, while \cite{vasylenko2025longcontextgeneralizationsparseattention} show that $\alpha$-entmax attention can mitigate attention dispersion and representational collapse for long sequences. Our paper provides a test-time regression interpretation of these frameworks. 

\paragraph{Kernel methods in transformers.}
Early interpretations view softmax attention as a kind of similarity kernel, where attention weights arise from pairwise comparisons between queries and keys~\citep{tsai2019TransformerDissection}.
Building on this statistical view, recent work has shown that at test time transformers can behave like implicit Nadaraya-Watson estimators, with attention patterns converging to regression‑like weights that resemble classical kernel regression outputs~\citep{wang2025testtime}. Related research explores test‑time regression and meta‑learning behaviours in transformers and fast weight systems where models implicitly solve regression or optimization tasks in context~\citep{von2023transformers,schlag2021linear,behrouz2024titans,sun2025learning}. While prior work focuses on dense kernels and efficient approximations, we extend the kernel regression view to sparse, compactly supported kernels, unifying dense and sparse attention within a single framework.

\paragraph{Neurobiological context.} %In the domain of neurobiological systems such as humans, 
In neurobiological systems, the problem of understanding the long-range dependency structure in sequences arises in episodic memory \citep{tulving2002episodic}. The hippocampal indexing theory suggests that the brain solves this problem with hippocampal engrams serving as indices with which the brain retrieves neocortical traces of previous experiences \citep{teyler1986hippocampal,goode2020integrated}. Notably, a core characteristic of hippocampal population activity is sparseness. This property of hippocampal activity is often attributed to its functional role of pattern separation \cite{yassa2011patternseparation}. From the perspective of the current work, hippocampal indices serve as keys, which can be queried, to neocortical values \cite{gershman2025keyvalue}. Thus, sparsity can be understood as an implicit use of compact-support kernels whereby neocortical memory retrieval is mediated by a relatively small set of well-separated memories, rather than weakly integrating over an increasingly large number of memories accumulated over a lifetime \cite{tulving2002episodic}.

\section{Conclusions}
\label{sec:conclusions}
In this work, we show that sparse attention mechanisms can be interpreted as kernel regression with compactly supported kernels, providing a principled framework that unifies dense and sparse attention. We demonstrate that $r$-order rectified polynomial kernels allow controlled sparsity, from which several attention mechanisms, including $\alpha$-entmax, can be derived. For $r=1$, the Epanechnikov kernel underlies several attention mechanisms: normalized ReLU, which can produce empty-support indeterminacies when all pre-activations are negative; 1-ReLUmax, which anchors the support at the maximum key to avoid this issue; and sparsemax, which further introduces adaptive normalization by adjusting the kernel bandwidth based on the attention scores also ensuring non-degenerate attention. Our empirical results confirm that attention mechanisms with adaptive normalization outperform dense Gaussian and heuristic top-$k$ methods, generally delivering superior performance. 
While we experiment with models of various sizes, our experiments are all with relatively small models ($<$1B parameters). Further scaling experiments, beyond our compute budget, are needed to assess the benefits of compact kernels in larger models. We highlight this as a direction for future work.

\section*{Acknowledgments}
We would like to thank Hugo Pitorro, Lili Mou, Pavlo Vasylenko, Mathias Lindemann, and the SARDINE lab team for the helpful discussions.
We thank the Champalimaud Foundation for supporting this research. 
This work was supported  by the project DECOLLAGE (ERC-2022-CoG 101088763), by the Portuguese Recovery and Resilience Plan through project C645008882-00000055 (Center for Responsible AI), and by FCT/MECI
through national funds and when applicable co-funded EU funds under UID/50008: Instituto de
Telecomunicações.

\bibliographystyle{plain}
\bibliography{bib}
%%%%%%%%%%%%%%%%%%%%%%%%%%%%%%%%%%%%%%%%%%%%%%%%%%%%%%%%%%%%

\appendix
\onecolumn
\clearpage
\newpage

\section{Proof of Proposition~\ref{prop:entmax_rectified_poly}}
\label{proof:main_proof}

\begin{proof}
Let $\bm{q} \equiv \bm{k}_n$ and $r = \frac{1}{\alpha-1}$.
Since $\|\kk_i\| = \|\bm{q}\| = 1$, we have $\frac{1}{2}\|\kk_i - \bm{q}\|^2 = 1 - \kk_i^\top \bm{q}$. 
The $r$-weight polynomial kernel is defined as:
\begin{align}
K_h(\kk_i - \bm{q}) &= \left[1 - \frac{\|\kk_i - \bm{q}\|^2}{h^2}\right]_+^r \\
&= \left[1 - \frac{2(1 - \kk_i^\top \bm{q})}{h^2}\right]_+^r \\
&= \left[\frac{h^2 - 2 + 2\kk_i^\top \bm{q}}{h^2}\right]_+^r.
\end{align}
That is, the induced kernel takes the form $K_h(\bm{u})
\propto
\left[1 - \|\bm{u}\|^2\right]_+^r$ ,
a rectified polynomial kernel of order $r$, encompassing classical kernels such as the biweight ($r=2$) and triweight ($r=3$).
Now, introducing a temperature $\gamma$, the denominator in \eqref{eq:entmax_proposition_1} can be expressed as
\begin{align}
\sum_{i=1}^{n-1}\left[1 - \frac{\|\kk_i - \bm{q}\|^2}{h^2}\right]_+^r
&= \left(\frac{2r\gamma}{h^2}\right)^r
\sum_{i=1}^{n-1}
\left[\frac{h^2}{2r\gamma} - \frac{1}{r\gamma} + \frac{\kk_i^\top \bm{q} }{r\gamma}\right]_+^r. \label{eq:epanechnikov_rewrite}
\end{align}

Choosing $h$ such that
\begin{align}
\sum_{i=1}^{n-1}
\left[\frac{\kk_i^\top \bm{q}}{r\gamma} - \underbrace{\left(\frac{1}{r\gamma} - \frac{h^2}{2r\gamma}\right)}_{:= \tau}  \right]_+^r = 1, \label{eq:entmax_norm}
\end{align}
the normalizer $\tau = \frac{1}{r\gamma} - \frac{h^2}{2r\gamma}$ coincides with the $\alpha$-entmax threshold \eqref{eq:entmax}, yielding
\begin{align}
\hat{\vv}_n(\bm{q})
= \sum_{i=1}^{n-1}
\alpha\text{-}\text{entmax}_i\!\left(\frac{\K\bm{q}}{\gamma}\right)\vv_i,
\label{eq:entmax_kernel_regression}
\end{align}
with an adaptive bandwidth $h = \sqrt{2 - 2r\gamma \tau}$.
\end{proof}

% \newpage

\section{Experimental Details}
\label{app:experimental_details}
\subsection{Language Modeling}
\label{subsec:app_lm}

In this section, we provide additional details of our experimental setup, including model configurations and training hyperparameters for the language modeling task (Table~\ref{tab:training_hyperparams}). We also show the full training and validation curves for the BabiStories dataset (see statistics in Table~\ref{tab:babistories_stats}) \citep{zhang2025memory}, complementing the results in the main text (Figure~\ref{fig:train_val_full}). All models use the GPT-2 tokenizer \citep{radford2019language-gpt2} and share a sequence length of 512 tokens. 
Experiments cover different numbers of parallel layers of persistent and contextual memories. For single-layer models, we vary $k \in {16, 32, 64}$ and report results for the best-performing value ($k=32$), which is then used for all multi-layer configurations. Each contextual memory unit computes leaky-average keys and one-step-ahead values, with retrieval performed via kernel-weighted interpolation over past embeddings. All models are trained for 80k iterations with a global batch size of 512, optimized with Adam, and using learning rate schedules consistent with GPT-2 Small \citep{radford2019language-gpt2}. For 12-layer models, the learning rate is reduced to $10^{-3}$. Sparse attention patterns for the rectified polynomial kernels are learned efficiently using AdaSplash \citep{goncalves2025adasplash}. While in practice the normalized ReLU is not often met, we use uniform weighting for those cases. These experiments were run on h100 and h200 GPUs.
\begin{table}[t]
\centering
\caption{Main training and model hyperparameters across model depths. 
Unless otherwise specified, hyperparameters are shared across all experiments.$^*$ means that for 2-ReLUmax we used a learning rate of $1 \times 10^{-3}$ and minimum learning rate of $5 \times 10^{-5}$ due to training instability.}
\label{tab:training_hyperparams}
\begin{tabular}{lccc}
\toprule
\textbf{Hyperparameter} & \textbf{1 layer} & \textbf{8 layers} & \textbf{12 layers} \\
\midrule
\multicolumn{4}{l}{\emph{Optimization}} \\
Batch size (total)        & 512 & 512 & 512 \\
Learning rate             & $5{\times}10^{-3} $& $5{\times}10^{-3}$ $^*$ & $1{\times}10^{-3}$ \\
Min learning rate         & $1{\times}10^{-4}$ & $1{\times}10^{-}4$ $^*$  & $5{\times}10^{-5}$ \\
Max iterations            & 80k & 80k & 80k \\
Weight decay              & 0.1 & 0.1 & 0.1 \\
Adam $\beta_1$ / $\beta_2$ & 0.9 / 0.95 & 0.9 / 0.95 & 0.9 / 0.95 \\
Gradient clipping         & 1.0 & 1.0 & 1.0 \\
\midrule
\multicolumn{4}{l}{\emph{Learning rate schedule}} \\
Scheduler                 & \multicolumn{3}{c}{Cosine decay} \\
Warmup iterations         & \multicolumn{3}{c}{2k} \\
LR decay iterations       & \multicolumn{3}{c}{80k} \\
\midrule
\multicolumn{4}{l}{\emph{Model architecture}} \\
Context length ($n$)      & \multicolumn{3}{c}{512} \\
Embedding dimension ($d$) & \multicolumn{3}{c}{768} \\
Attention heads           & \multicolumn{3}{c}{12} \\

\midrule
\multicolumn{4}{l}{\emph{Persistent memory}} \\
Memory size               & \multicolumn{3}{c}{2688} \\
\bottomrule
\end{tabular}
\end{table}

\begin{table}[ht!]
\centering
\caption{BabiStories dataset statistics \citep{zhang2025memory}.}
\label{tab:babistories_stats}
\begin{tabular}{lccc}
\toprule
Dataset partition & \# Stories & \# Tokens (GPT-2 tokenizer) & Avg. \# Characters per story \\
\midrule
Train & 2.2M  & 474,704,907 & 888 \\
Valid & 2.2k  & 4,749,107   & 889 \\
\bottomrule
\end{tabular}
\end{table}

\begin{figure*}[ht!]
    \centering
    \includegraphics[width=1\linewidth]{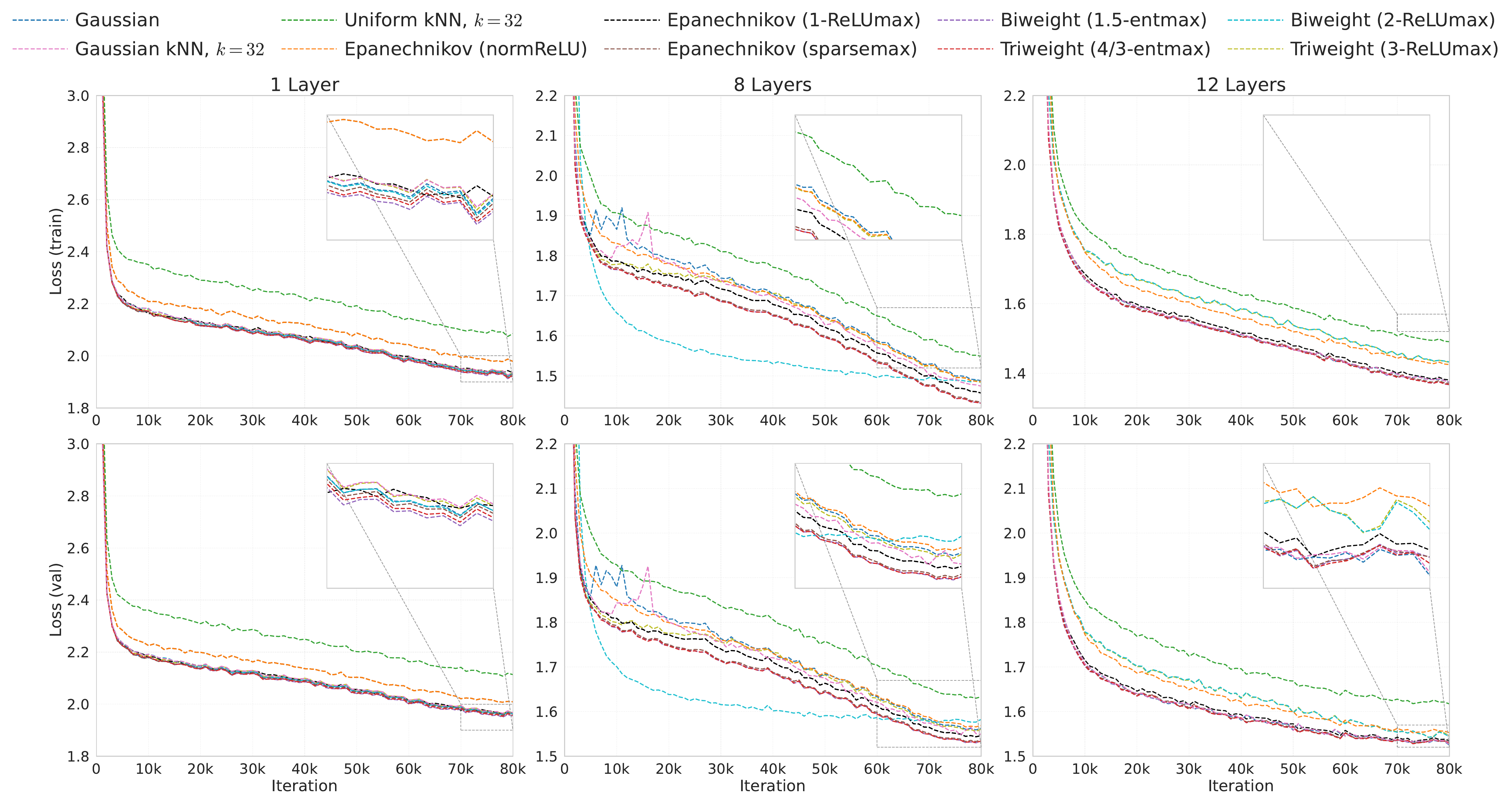}
    \caption{\textbf{Training and validation loss of Memory Mosaic models with different kernels.} Comparison on the BabiStories dataset for varying model depths. The horizontal axis shows the number of training iterations.}

    \label{fig:train_val_full}
\end{figure*}

\subsection{In-Context Learning}
\label{app:icl_experimental_details}

For evaluation, we use RegBench~\citep{akyürek2024incontextlanguagelearningarchitectures}.
RegBench constructs random artificial languages defined by a PFA. Each input sequence consists of 10–20 strings sampled from the same PFA, separated by special delimiter tokens. Models are evaluated on their ability to predict the last token of testing sequences generated from held-out PFAs, effectively measuring their capacity to generalize the underlying generative rules.

For Memory Mosaics, we perform a grid search over the Gaussian kernel hyperparameters, select the best configuration based on validation loss, and apply those hyperparameters to the remaining kernels. 
During the hyperparameter search for the Gaussian kernel variant, various configurations were tested, including different number of blocks and weight decay. Particular attention was given to the hyperparameters of the Memory Mosaics, such as the number of heads and embedding size.  
For kNN-based approaches, we tune $k \in \{16, 32, 64\}$ on 1k training examples and use that value ($k=16$) for the remaining training datasets and for uniform kNN. 
All hyperparameters tested are shown in Table \ref{tab:icl_hyperparam}. These experiments were run on one single a6000 GPU.
\begin{table}[h]
    \centering
    \caption{Hyperparameter space for the in-context learning experiment. Depth denotes the number of stacked blocks comprising persistent and contextual memories. Embedding size corresponds to the head dimension in transformers. We report results from the epoch achieving the lowest validation loss.}
    \label{tab:icl_hyperparam}
    \begin{tabular}{ l  l }
        \toprule
        {\textbf{Parameter}} & {\textbf{Range}} \\
        \midrule
        weight decay & \{$10^{-1}$, $10^{-2}$\} \\
        depth & \{2, 4, 8\}  \\
        embedding size & \{32, 128, 256\}  \\
        number of heads & \{1, 2, 4, 8\}  \\
        learning rate & \{$5\times 10^{-4}$\} \\
        maximum number of epochs & \{200\} \\
        batch size & \{32\}\\
        \bottomrule
    \end{tabular}
\end{table}

\subsection{Length Generalization}
\label{app:synthetic_experimental_details}
For biological agents, episodic memory scales with the length of life rather than a fixed context window, placing memory retrieval inherently in the long-context regime and requiring continual length generalization. In such settings, dense aggregation over all stored memories would lead to severe interference, motivating the use of sparse retrieval mechanisms that selectively engage only a small subset of relevant memories. This perspective aligns with hippocampal indexing theory, where retrieval operates over sparse indices rather than dense memory traces \citep{teyler1986hippocampal,goode2020integrated}. Inspired by this neurobiological constraint, we evaluate sparse attention mechanisms in a set of controlled synthetic tasks designed to isolate core sequence-modeling abilities. Specifically, we consider multi-query multi-token associative recall (MQMTAR), sequence reversal, and sequence sorting, which respectively probe long-range associative retrieval, sequential manipulation, and global reordering. To ensure comparability across tasks and focus on the effects of the attention mechanism itself, we fix all model hyperparameters to modest values rather than performing extensive task-specific tuning; the shared hyperparameters are summarized in Table~\ref{tab:combined_hyperparam}. We show in Table~\ref{tab:k_results} additional results for different $k$ for the kNN variants. For the Transformer model, we use the GPT2 \citep{radford2019language-gpt2} architecture with the ALiBi \cite{press2022train} positional embedding. These experiments were run on A6000 GPUs.

\paragraph{Multi-query Multi-token Associative Recall (MQMTAR).}
The Multi-query Multi-token Associative Recall task evaluates a model’s ability to store and retrieve multiple key–value associations within a single sequence and to answer several queries in parallel. The vocabulary consists of 256 integers, augmented with utility tokens \{0, 1, 2\} and additional special tokens used for structural delimitation. Each key $k_i$, value $v_i$, and query $q_j$ is represented by a fixed-length sequence of two discrete tokens, corresponding to two numbers drawn from the base vocabulary. During input construction, multiple key–value pairs $(k_i, v_i)$ are presented sequentially, followed by a set of four query keys $\{q_1, q_2, q_3, q_4\}$, all sharing the same two-token structure as the keys seen during the context. The model’s objective is to retrieve, for each query $q_j$, the value $v_i$ associated with the matching key $k_i$ encountered earlier in the sequence. The target output is formed by concatenating the two-token representations of the four retrieved values in query order. This task jointly probes multi-token associative binding, robustness to interference across multiple stored associations, and the ability to perform accurate multi-query recall within a single forward pass.

\begin{figure}[h!]
    \centering
    \includegraphics[width=1\linewidth]{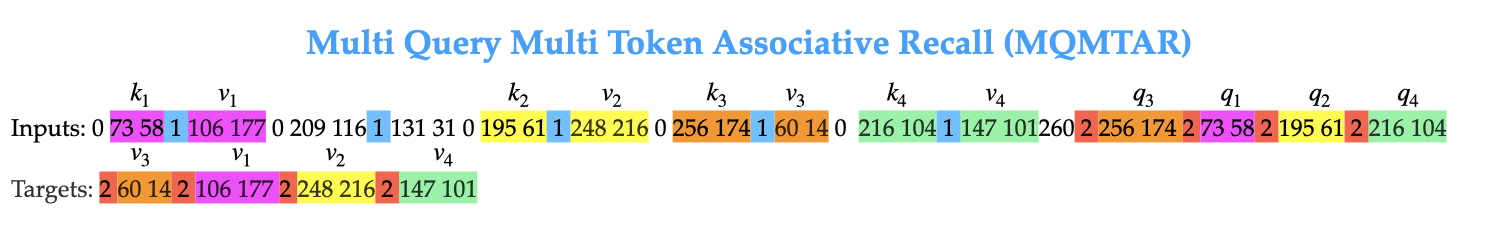}
    \includegraphics[width=0.85\linewidth]{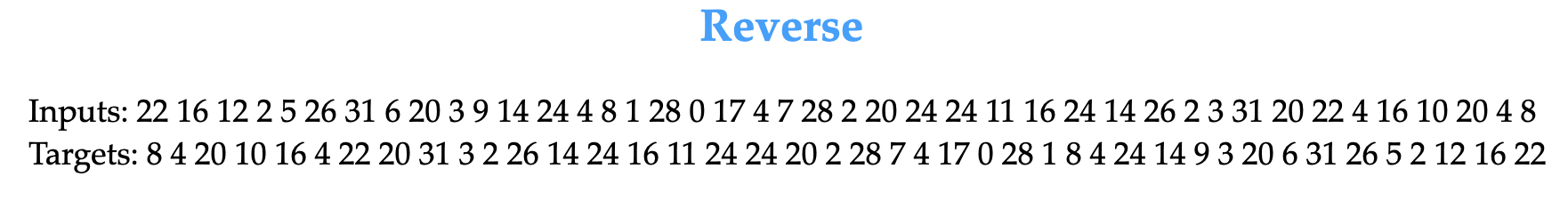}
    \includegraphics[width=0.6\linewidth]{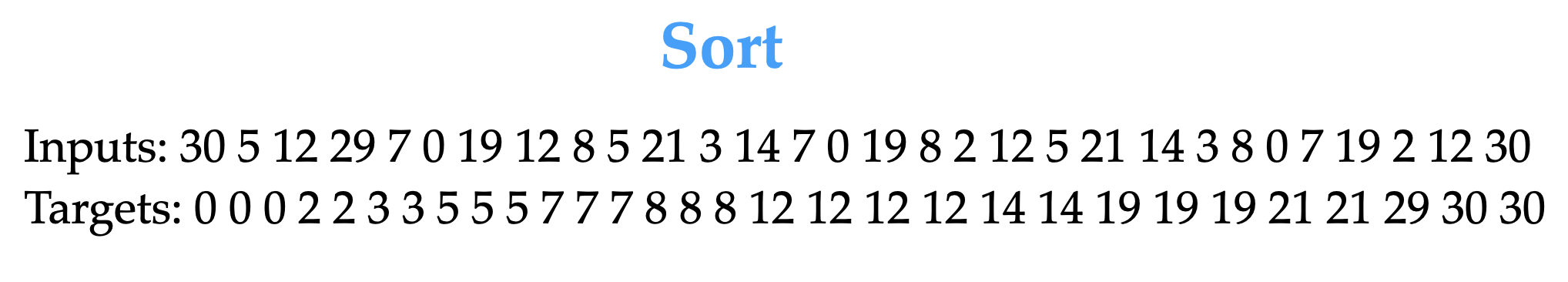}
    \caption{Illustration of the MQMTAR, reverse, and sort tasks. 
    In the MQMTAR task, Soft colors indicate different keys, values, and queries for visual clarity. 
    In the reverse task, the model receives a sequence of integers and must output them in reverse order.
    In the sort task, the model receives a sequence of integers and must output the sequence in ascending order.
    }

    \label{fig:all_tasks}
\end{figure}
\begin{table}[t]
\centering
\caption{Hyperparameters used for MQMTAR recall, reverse, and sorting tasks. Hyperparameters were fixed to keep models simple and comparable.}
\label{tab:combined_hyperparam}
\begin{tabular}{lccc}
\toprule
\textbf{Parameter} & \textbf{MQMTAR} & \textbf{Reverse} & \textbf{Sorting} \\
\midrule
Training examples & $50\times 10^{6}$ & $30\times 10^{6}$ & $40\times 10^{6}$ \\
\# epochs & 1&1&1\\
Vocab size & 263 & 36 & 36 \\
Weight decay & $10^{-1}$ & $10^{-1}$ & $10^{-1}$ \\
Depth & 4 & 4 & 2 \\
Embedding size & 512 & 512 & 512 \\
Number of heads & 8 & 8 & 8 \\
Learning rate & $10^{-3}$ & $10^{-3}$ & $10^{-3}$ \\
Batch size & 256 & 256 & 256 \\
Training sequence length &  up to 64& 64 & 64 \\
\bottomrule
\end{tabular}
\end{table}

\paragraph{Reverse.} The reverse task evaluates a model’s ability to store and reproduce the elements of a discrete sequence in reverse order. The vocabulary consists of integers from 0 to 31, and additional special tokens used for structural delimitation.

During input construction, a sequence of integers $(x_1, x_2, \dots, x_n)$ is presented to the model. The model’s objective is to output the sequence in reverse order $(x_n, x_{n-1}, \dots, x_1)$. This task probes the model’s ability to memorize and manipulate sequential information over short-to-moderate context lengths.

\paragraph{Sort.} The sort task evaluates a model’s ability to correctly reorder a sequence of discrete tokens in ascending order. The vocabulary consists of integers from 0 to 31, including repetitions.

During input construction, a sequence of integers $(x_1, x_2, \dots, x_n)$ is presented to the model. The model’s objective is to output the sequence sorted in non-decreasing order $(y_1 \le y_2 \le \dots \le y_n)$. This task probes the model’s ability to maintain and manipulate sequential information over short-to-moderate context lengths.

\begin{table*}[t]
\caption{Exact match accuracy on representative synthetic tasks. In-distribution results ($n=64$) and out-of-distribution performance at increasing sequence lengths are reported. Values show the mean across three seeds, with the maximum across seeds indicated in superscript. Best average performance is in \textbf{bold} and overall maximum performance is \underline{underlined}. $L$ represents the number of layers.}
  \small
  \label{tab:k_results}
  \centering
   \setlength{\tabcolsep}{2pt}
   \resizebox{\textwidth}{!}{
  \begin{tabular}{l ccccccc cc ccc}
    \toprule
    & \multicolumn{7}{c}{MQMTAR $(L=4)$} 
    & \multicolumn{2}{c}{Reverse $(L=4)$}
    & \multicolumn{3}{c}{Sort $(L=2)$} \\
    \cmidrule(lr){2-8} \cmidrule(lr){9-10} \cmidrule(lr){11-13}
    \bf Kernel & 
    $1\times$ & $2\times$ & $4\times$ & $8\times$ & $16\times$ & $32\times$ & $64\times$ &
    $1\times$ & $1.5\times$ &
    $1\times$ & $2\times$ & $4\times$ \\
    
    \midrule 
Gaussian kNN, $k=16$ & \textbf{0.99}$^{(\underline{1.00})}$ & \textbf{0.97}$^{(\underline{1.00})}$ & \textbf{0.93}$^{(\underline{0.99})}$ & \textbf{0.85}$^{(\underline{0.96})}$ & 0.65$^{(0.78)}$ & 0.25$^{(\underline{0.36})}$ & \textbf{0.05}$^{(\underline{0.09})}$ & 0.86$^{(\underline{1.00})}$ & \textbf{0.00}$^{(\underline{0.00})}$ & \textbf{1.00}$^{(\underline{1.00})}$ & 0.74$^{(0.89})$ & 0.00$^{(0.00)}$ \\

Gaussian kNN, $k=32$ & \textbf{0.99}$^{(\underline{1.00})}$ & \textbf{0.97}$^{(\underline{1.00})}$ & \textbf{0.93}$^{(\underline{0.99})}$ &   0.84$^{(\underline{0.96})}$ & \textbf{0.68}$^{(\underline{0.85})}$ & \textbf{0.27}$^{(\underline{0.36})}$ & \textbf{0.05}$^{(0.05)}$ & \textbf{0.99}$^{(\underline{1.00})}$ & \textbf{0.00}$^{(\underline{0.00})}$ & \textbf{1.00}$^{(\underline{1.00})}$ & \textbf{0.80}$^{(\underline{0.91})}$ & \textbf{0.01}$^{(\underline{0.02})}$ \\

\midrule 
Uniform kNN, $k=16$  
& \textbf{0.58}$^{(\underline{0.71})}$ 
& \textbf{0.01}$^{(\underline{0.01})}$ 
& \textbf{0.00}$^{(\underline{0.00})}$ 
& \textbf{0.00}$^{(\underline{0.00})}$ 
& \textbf{0.00}$^{(\underline{0.00})}$ 
& \textbf{0.00}$^{(\underline{0.00})}$ 
& \textbf{0.00}$^{(\underline{0.00})}$ 
& 0.00$^{(0.00})$ 
& \textbf{0.00}$^{(\underline{0.00})}$ 
& \textbf{1.00}$^{(\underline{1.00})}$ 
& \textbf{0.41}$^{(\underline{0.63})}$ 
& \textbf{0.00}$^{(\underline{0.00})}$ \\

Uniform kNN, $k=32$ 
& 0.04$^{(0.08)}$ 
& \textbf{0.00}$^{(\underline{0.00})}$ 
& 0.00$^{(.00})$ 
& \textbf{0.00}$^{(\underline{0.00})}$ 
& \textbf{0.00}$^{(\underline{0.00})}$ 
& \textbf{0.00}$^{(\underline{0.00})}$ 
& \textbf{0.00}$^{(\underline{0.00})}$ 
& \textbf{0.42}$^{(\underline{0.66})}$ 
& \textbf{0.00}$^{(\underline{0.00})}$ 
& \textbf{1.00}$^{(\underline{1.00})}$ 
&0.00$^{(0.00})$ 
& \textbf{0.00}$^{(\underline{0.00})}$ \\

    \bottomrule
  \end{tabular}
   }
\end{table*}
\subsection{Efficiency Considerations}
\label{app:efficiency}
In standard dense softmax attention, each query distributes probability mass over all keys, resulting in computation and memory costs that scale linearly with the sequence length. In contrast, compact-support kernels and sparse attention transformations can assign exactly zero weight to many positions, meaning that only a subset of tokens participates in the weighted aggregation. This introduces the potential for reduced computational and memory overhead, particularly in long-context settings.

We note that the experiments in this paper operate in a short-context regime ($512$ tokens for the language modeling and up to a few thousands for the synthetic tasks) in which attention is not the dominant cost and the wall-clock gains of any fused attention kernel are modest~\citep{dao2023flashattention2}. Our focus is therefore on the modeling implications of compact-support kernels rather than on raw throughput.

At longer context lengths, however, the extent to which these theoretical benefits translate into practical speedups depends strongly on implementation. %Efficient kernelized attention mechanisms that explicitly exploit sparsity structure can achieve substantial gains. For instance, AdaSplash~\cite{goncalves2025adasplash}, a Triton-based implementation that we use in our experiments for entmax attention, can outperform FlashAttention-2 \citep{dao2023flashattention2} in regimes with high sparsity during training.
By the kernel regression correspondence established in this paper, the entire $\alpha$-entmax family (e.g. Epanechnikov, biweight, triweight, ...) are fully compatible with  Adasplash~\citep{goncalves2025adasplash}, which can match and even outperform FlashAttention~\citep{dao2023flashattention2} in high-sparsity regimes which naturally emerge at longer context lengths.

By contrast, combinatorial sparsification mechanisms such as top-$k$ attention often require explicit selection or sorting operations, which are difficult to fuse into optimized GPU kernels and may limit practical efficiency despite inducing sparsity in the attention pattern.

%Overall, this highlights that efficiency is determined not only by the mathematical form of the attention mechanism but also by how sparsity is implemented in practice. Among the mechanism here, those expressible as $\alpha$-entmax admit existing fused implementation, while combinatorial schemes such as top-$k$ remain harder to fuse efficiently.

\section{Additional Experiments}
\label{app:experiments}
\subsection{Sparse Memory Mosaics with Positional Embedding}
\label{app:alibi}

To assess whether our gains stem from the proposed method rather than existing sparsity-inducing mechanisms, we compare against a standard Transformer with ALiBi \citep{press2022train}, which introduces a positional bias favoring local attention.

\begin{table*}[t]
\caption{Exact match accuracy on the MQMTAR task. In-distribution results ($1\times$) and out-of-distribution performance at increasing sequence lengths are reported. Values show the mean across seeds, with the maximum indicated in superscript. Best average performance is in \textbf{bold} and overall maximum performance is \underline{underlined}.}
\small
\label{tab:alibi_comparison}
\centering
\setlength{\tabcolsep}{4pt}
\resizebox{\textwidth}{!}{
\begin{tabular}{l ccccccc}
\toprule
 & \multicolumn{7}{c}{MQMTAR} \\
\cmidrule(lr){2-8}
\bf Model 
& $1\times$ & $2\times$ & $4\times$ & $8\times$ & $16\times$ & $32\times$ & $64\times$ \\
\midrule

Transformer (ALiBi) 
& \textbf{1.00}$^{(\underline{1.00})}$ 
& \textbf{1.00}$^{(\underline{1.00})}$ 
& 0.99$^{(0.99)}$ 
& 0.09$^{(0.15)}$ 
& 0.01$^{(0.01)}$ 
& 0.00$^{(0.00)}$ 
& 0.00$^{(0.00)}$ \\

MM, Epanechnikov (ALiBi) 
& 0.99$^{(\underline{1.00})}$ 
& 0.96$^{(\underline{1.00})}$ 
& 0.92$^{(\underline{1.00})}$ 
& 0.85$^{(\underline{1.00})}$ 
& 0.72$^{(0.94)}$ 
& 0.43$^{(0.71)}$ 
& 0.14$^{(0.29)}$ \\

MM, Epanechnikov (no ALiBi) 
& \textbf{1.00}$^{(\underline{1.00})}$ 
& \textbf{1.00}$^{(\underline{1.00})}$ 
& \textbf{1.00}$^{(\underline{1.00})}$ 
& \textbf{0.98}$^{(\underline{1.00})}$ 
& \textbf{0.86}$^{(\underline{0.98})}$ 
& \textbf{0.63}$^{(\underline{0.84})}$ 
& \textbf{0.31}$^{(\underline{0.51})}$ \\

\bottomrule
\end{tabular}
}
\end{table*}

The results shown in Table~\ref{tab:alibi_comparison} show that, while ALiBi introduces a locality bias that encourages sparsity by prioritizing nearby tokens, it does not yield comparable performance. In contrast, our approach employs adaptive compact kernels to enable data-dependent sparsity, allowing the model to attend to relevant tokens irrespective of distance. As a result, Memory Mosaics maintains strong performance as context length increases, even without positional embedding, whereas the Transformer with ALiBi degrades rapidly.

\subsection{Out-Of-Distribution}
\paragraph{Out-of-distribution generalization on Simple English Wikipedia.}
Figure~\ref{fig:OOD} compares the prediction performance of Memory Mosaics models trained on the BabiStories corpus and evaluated out-of-distribution on Simple English Wikipedia\footnote{\url{https://huggingface.co/datasets/rahular/simple-wikipedia}}. 
The reported curves correspond to an exponential moving average (EMA) of the token-wise cross-entropy loss, computed over previously observed tokens, which provides a smoothed estimate of predictive confidence as the sequence unfolds.

At the beginning of each sequence, the model has limited contextual information and operates in a high-uncertainty regime. In this early phase, all attention variants exhibit similar behavior, with rapidly decreasing loss as additional context is incorporated, although higher-order compact kernels such as biweight and triweight typically start from slightly lower loss values. This reflects that, under uncertainty, strongly localized kernels may initially outperform dense Gaussian attention, which aggregates information more broadly.

As the sequence progresses and the model becomes more confident, clear differences emerge. Attention mechanisms based on compactly supported kernels—Epanechnikov, biweight, and triweight—achieve loss comparable to Gaussian attention, indicating that localized, sparse aggregation promotes selective, context-relevant memory retrieval in out-of-distribution settings. Among these, ReLUmax performs best overall: by combining sparse support with max anchoring, it avoids empty-support degeneracies while retaining locality, yielding stable and robust generalization as contextual evidence accumulates.

\begin{figure}[t]
    \centering
    \includegraphics[width=0.75\linewidth]{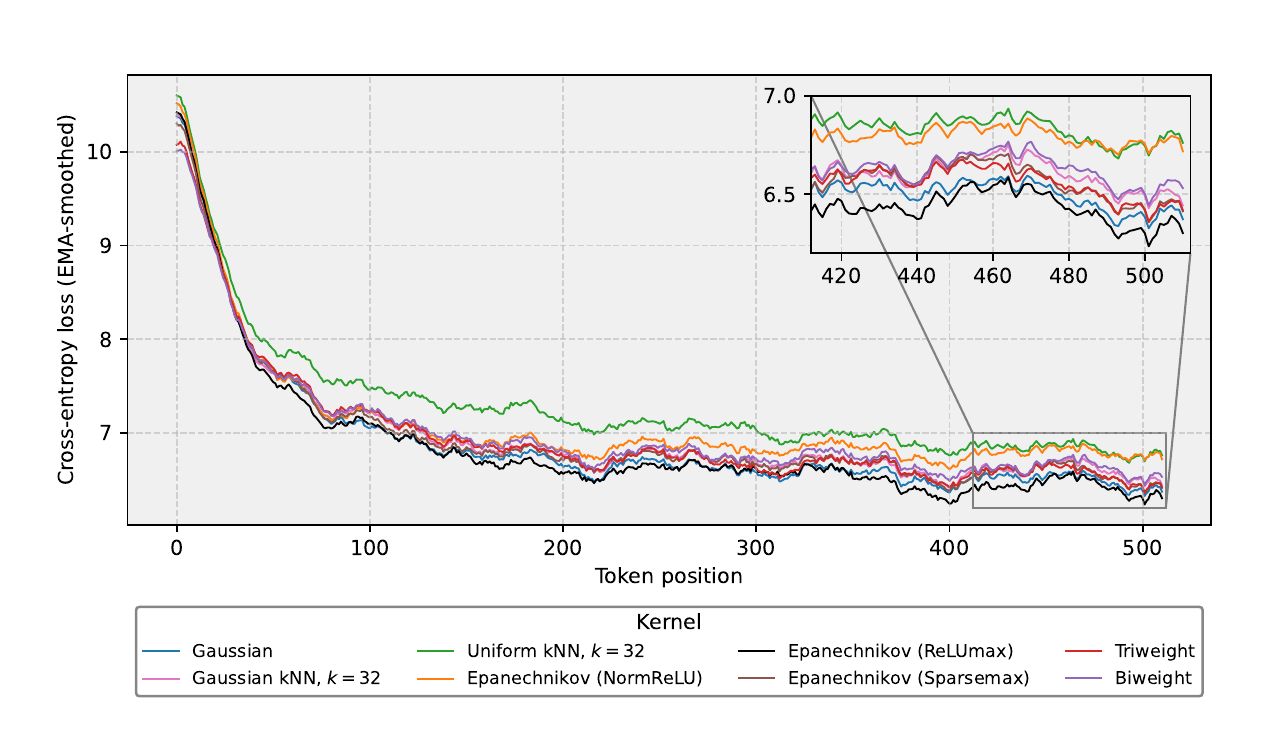}
    \caption{Prediction performance on the Simple English Wikipedia dataset using models trained on the BABISTORIES corpus. 
The plot reports the per-token average loss as a function of the token position within a 512-token input sequence.}
    \label{fig:OOD}
\end{figure}

\section*{Impact Statement}
\label{app:broader_impact}
This work advances the understanding of sparse attention mechanisms by framing them as kernel regression with compactly supported kernels. 
While we do not foresee ethical complications as a direct result of our method, we note that our contributions can be applied for training language models, and thus they inherit the broader ethical considerations associated with transformer-based language models. 
We emphasize responsible deployment, careful dataset curation, and energy-efficient training practices in such cases. 
%%%%%%%%%%%%%%%%%%%%%%%%%%%%%%%%%%%%%%%%%%%%%%%%%%%%%%%%%%%%

\end{document}